\definecolor{darkred}{RGB}{150,0,0}
\definecolor{darkgreen}{RGB}{0,150,0}
\definecolor{darkblue}{RGB}{0,0,200}
\definecolor{orange}{RGB}{205, 140,0}
\newcommand{\dgreen}[1]{\color{darkgreen}{ #1}\color{black}}
\numberwithin{equation}{section}
\newtheorem{theorem}{Theorem}
\newtheorem{lemma}[theorem]{Lemma}
\newtheorem{corollary}[theorem]{Corollary}
\newtheorem{proposition}[theorem]{Proposition}
\newtheorem{remark}{Remark}
\def \endprf{\hfill {\vrule height6pt width6pt depth0pt}\medskip}
\newenvironment{proof}{\noindent {\it Proof.} }{\endprf\par}
\newcounter{example}
\newenvironment{example}[1][]{\refstepcounter{example}\par\medskip
   \noindent\textbf{Example~\theexample #1 } \rmfamily}{\medskip} 
\newcommand{\beq}{\begin{equation}}
\newcommand{\eeq}{\end{equation}}
\newcommand{\beqa}{\begin{equation} \begin{aligned}}
\newcommand{\eeqa}{\end{aligned} \end{equation}}
\newcommand{\beqas}{\begin{equation*} \begin{aligned}}
\newcommand{\eeqas}{\end{aligned} \end{equation*}}
\newcommand{\bit}{\begin{itemize}}
\newcommand{\eit}{\end{itemize}}
\newcommand{\st}{\star}
\newcommand{\eps}{\epsilon}
\newcommand{\E}{\operatorname{\mathbb{E}}}  
\newcommand{\prob}{\operatorname{\mathbb{P}}}
\newcommand{\norm}[1]{\|#1\|}
\newcommand{\abs}[1]{|#1|}
\newcommand{\Cc}{\mathcal{C}}
\newcommand{\Yc}{\mathcal{Y}}
\newcommand{\iprod}[2]{\langle #1 , #2 \rangle}
\newcommand{\ER}{{Erd\H{o}s-R\'enyi~}}
\newcommand{\ber}{\operatorname{Ber}}
\newcommand{\bin}{\operatorname{Bin}}
\newcommand{\fdiv}[3]{D_{#1}(#2,#3)}
\newcommand{\tri}[3]{(#1,\,#2,\,#3)}
\newcommand{\chidiv}{\widetilde D}
\newcommand{\kldiv}{D_{\mathrm{KL}}}
\title{Relative Density and Exact Recovery in Heterogeneous Stochastic Block Models }
\author{Amin Jalali, Qiyang Han, Ioana Dumitriu, Maryam Fazel}
\begin{document}
\maketitle

\begin{abstract}
The Stochastic Block Model (SBM) is a widely used random graph model for networks with communities. 
Despite the recent burst of interest in recovering communities in the SBM 
from statistical and computational points of view, there are still gaps in understanding the fundamental information theoretic and computational limits of recovery. 
In this paper, we consider the SBM in its full generality, where there is no restriction on the number and sizes of communities or how they grow with the number of nodes, as well as on the connection probabilities 
inside or across communities. 
This generality allows us to move past the artifacts of homogenous SBM, and understand the right parameters (such as the relative densities of communities) that define the various recovery thresholds. We outline the implications of our generalizations via a set of illustrative examples. 
For instance, $\log n$ is considered to be the standard lower bound on the cluster size for exact recovery via convex methods, for homogenous SBM. We show that it is possible, in the right circumstances (when sizes are spread and the smaller the cluster, the denser), to recover very small clusters (up to $\sqrt{\log n}$ size), if there are just a few of them (at most polylogarithmic in $n$). 
\end{abstract}

\section{Introduction} 

A fundamental problem in network science and machine learning is to discover structures in large, complex real-world networks (e.g., biological, social, or information networks).  Communities are one of the most basic structures to look for, and are useful in many ways including simplifying network analysis.  
Community or cluster detection also arises in machine learning and underlies many decision tasks, as a basic step that uses pairwise relations between data points in order to understand more global structures in the data.  Applications of community detection are numerous, and include recommendation systems \cite{xu2014jointly}, image segmentation \cite{shi2000normalized, meila2001random}, learning gene network structures in bioinformatics, e.g., in protein detection \cite{CY:06} and population genetics \cite{JTZ:04}.
 
In spite of a long history of heuristic algorithms (see, e.g., \cite{leskovec2010empirical} for an empirical overview), as well as strong research interest in recent years on the theoretical side as reviewed in the next section, there are still gaps in understanding the fundamental information theoretic limits of recoverability (i.e., if there is enough information 
to reveal the communities)  and computational tractability (if there are efficient algorithms to recover them). 
This is particularly true in the case of sparse graphs (that test the limits of recoverability), 
graphs with heterogeneous communities (communities varying greatly in size and connectivity), graphs with a number of communities that grows with the number of nodes, and partially observed graphs (with various observation models). 

In this paper, we 
study recovery regimes and algorithms for community detection in sparse graphs generated under a heterogeneous stochastic block model, where there is no restriction on the number and sizes of communities or how they grow with the number of nodes, as well as the connection probabilities inside or across communities. 
We propose key network descriptors, called relative densities (defined in \eqref{def:relative_density}), that govern the exact recoverability of the communities, and determine ranges of these parameters
that lead to various regimes of difficulty of recovery.  
The implications of our generalizations are outlined in Section \ref{sec:this-paper} where illustrative examples 
provide insight into our results in Section \ref{sec:main-results}. 

\subsection{The Heterogenous Stochastic Block Model and Exact Recovery}	\label{sec:GSBM-def}

The stochastic block model (SBM), first introduced and studied in mathematical sociology by
Holland, Laskey and Leinhardt in 1983 \cite{holland1983stochastic}, can be described as follows. 
Start with $n$ vertices and partition the vertex set $\{1,2,\ldots,n\}$
into $r$ groups $V_1, V_2,\ldots, V_r\,$, of sizes $n_1, n_2,\ldots,
n_r$ respectively. 
Then, we draw an edge between two nodes with a probability depending on which communities they belong to; i.e., the probability of an edge between vertices $i$ and $j$ (denoted by $i \sim j$) is given by
\begin{align}	\label{eq:rand-graph-dist}
\prob(i \sim j) = 
\begin{cases}
p_k 	& \text{if there is a $k \in \{1,2,\ldots,r\}$ such that $i,j \in V_k$} \\
q	& \text{otherwise}
\end{cases}
\end{align}
where we assume $q < \min_k p_k$ in order for the idea of communities to make sense. Such inter-cluster edges are also known as ``ambient" edges. Notice that each of the $V_k$'s is endowed with an \ER graph structure $\mathcal{G}(n_k, p_k)$ (within each community $V_k\,$, the probability of an edge is given by the local probability $p_k$). This defines a distribution over random graphs known as the stochastic block model. To contrast our study of this general setting with previous works where homogenous SBM is considered (where the sizes and probabilities associated to the communities are equal, e.g., in \cite{chen2014statistical}), or other special cases of SBMs are studied (e.g., when the number of communities is fixed or grows slowly with the number of nodes such as in \cite{abbe2015community}), we sometimes refer to the above model as the {\em heterogenous stochastic block model}.

The community detection problem studied in this paper is then stated simply as: given the adjacency matrix of a graph generated by the heterogenous stochastic block model, can we recover the labels of {\em all} vertices, with high probability, using an algorithm that has been proved to do so, whether in polynomial time or not. 
Note that recovery with high probability is the best one can hope for, as--with tiny probability--the model can generate graphs where the partition is unrecoverable, e.g., the complete graph. Whether this problem is solvable depends on the parameters involved, and our results characterize parts of the model space for which such recovery is possible. Moreover, based on the computational complexity of the proposed algorithm, we can be in different subregimes, 
hard (recovery is possible \emph{theoretically}, but not necessarily \emph{efficiently}), 
easy (recovery can be done efficiently; i.e., there is a polynomial-time algorithm), 
and simple (recovery can be done by simple counting and thresholding procedures), 
as explained in the next section. 

In the next subsection, we mention other natural questions in community detection and review existing results in the literature. We summarize our new results in section \ref{sec:this-paper}.

\subsection{Related Work} 
What we can infer about the community structure from a single draw of the random graph varies based on the regime of model parameters. Often, the following scenarios are considered.


\begin{enumerate}[1.]
\item {\em Exact Recovery (Strong Consistency).} In this regime it is possible to recover all labels, with high probability. That is, an algorithm has been proved to do so, whether in polynomial time or not. 
%
Notice that we need the nodes in all communities to be connected for the exact recovery to be possible.  

\item {\em Almost Exact Recovery (Weak Consistency).} A total of $n - o(n)$ labels are
  recoverable, but no more. For example, consider the case where the graph has multiple components, all but one of which are tiny; the tiny components cannot be correctly classified. 


\item {\em Partial Recovery or Approximation Regime.} 
Only a {\em fraction} of vertices, i.e. $(1-\epsilon)n$ for some $\epsilon >0\,$, can be guaranteed to be recovered correctly. 
  For example, in the case
  of two symmetric communities, this fraction should be greater than $1/2$ (which
  one can obtain just by random guessing).
  


\item {\em Detectability.} One may construct a partition of the
  graph which is correlated\footnote{In this context, this means doing
    better than guessing.} with the true 
    partition, but one cannot \emph{guarantee} any kind of quantitative improvement
  over random guessing. This happens in very sparse
  regimes when some $p_k$'s and $q$ are of the same, small, order; e.g. see \cite{mossel2014reconstruction}. 

\end{enumerate}

It may appear at first that the differences between exact recovery with strong and weak consistencies (the first two regimes above) are small; to illustrate the
  differences, consider the situation when one has a very large (sized
  $n$) social network with a particular set of nodes of interest, which
  may also be large but $o(n)$. 
  An exact 
  recovery algorithm with strong consistency guarantees that, with high probability, \emph{all} of
  the nodes of interest will be correctly labeled. 
  An exact recovery algorithm with weak consistency 
  can guarantee that \emph{any} of the nodes will
  be correctly labeled with high probability, but may yield absolutely
  no guarantees about the entire set (in fact, depending on the set
  size, the probability that some nodes will be mislabeled may be
  $O(1)$). 
 In other words, in such setting, while the probability of correct recovery for a {\em fixed} set of $n-o(n)$ vertices may be zero, the probability of correct recovery for {\em some} set of $n-o(n)$ vertices is close to one. 


\paragraph{Thresholds.} 
Recently, there has been significant interest in determining \emph{sharp thresholds} (or phase transitions) for the various parameter regimes. 
Currently, the best understood case is the SBM with only two communities of equal size (which we refer to as binary SBM hereafter) for which all of the four regimes above have been identified and characterized in a series of recent papers \cite{coja2010graph,mossel2014reconstruction, mossel2013proof, massoulie_proof,mossel14belief, mossel2014consistency, abbe2014exact, hajek2014achieving}. Moreover, tractable algorithms have been proposed and they work down to the information-theoretical thresholds; i.e., information-theoretical and computational thresholds coincide for the case of binary SBM.  

Aside from this case, 
Abbe and Sandon \cite{abbe2015community} proved the existence of an information-theoretic threshold for exact recovery in the case when the number $r$ of communities is fixed and all community sizes are $O(n)$ (while the connectivity probabilities $p_k,q$ are $O(\log n/n)$). 
In particular, in \cite{abbe2015community}, they provided an almost linear-time algorithm using the knowledge of model parameters that works down to this information-theoretic threshold. Such knowledge is shown to be unnecessary in a fully agnostic algorithm developed in \cite{abbe2015recovering}. 


Outside of the settings described above, results tend to be inconclusive 
where not all the regimes are well understood and the bounds incorporate large or unknown constants. 
Although we do not aim to give an exhaustive review of the existing literature, we will mention the main state-of-the-art results for the regimes identified above.

\begin{enumerate}[\bf 1.]
\item \textbf{Exact Recovery (Strong Consistency).} Many partial results are available for general SBM,
  yielding upper bounds on the thresholds for efficient regimes, or
  lower bounds for exact recoverability; for example Chen and Xu \cite{chen2014statistical} which served as an inspiration for this paper. 
  The results in \cite{chen2014statistical}  cover the regime when all clusters are
  equivalent, that is, all $p_k = p$ and there are $r$ clusters, each
  of size $K := n/r\,$; $r$ and $p$ are allowed to vary with
  $n\,$. Depending on $K$, $p$, $q$, and $n$, they characterize the conditions under which 
  1) exact recovery is {\em impossible}, 
  2) exact recovery is possible \emph{theoretically}, but not necessarily \emph{efficiently}, e.g., by the Maximum Likelihood Estimator, 
  3) exact recovery can be done efficiently, e.g., by a semidefinite programming relaxation of the ML estimator, 
  4) exact recovery can be done by a simple counting and thresholding procedure.

The bounds for these regimes in \cite{chen2014statistical} are not shown to be
sharp thresholds, but they work down to the limit of cluster connectivity for $p$ and $K$, which with $K = O(n^{\beta}$) for some constant $0 \leq \beta \leq 1$, results in $p = O(\log n/K)$ (further lowering of $p$ will result in a disconnected graph, and as such strong recovery becomes impossible.) 
The downside of \cite{chen2014statistical} lies in the very strong assumption of equivalent clusters. The difficulty of such assumption in heterogeneous SBM wil be discussed in detail in Section \ref{sec:examples}. 


\item \textbf{Almost Exact Recovery (Weak Consistency).} 
  This case has not been extensively treated in the literature. 
  Yun and Proutiere \cite{yun2014community} studied the case when there is a finite number of clusters, all of size $O(n)$, and such that all intra-cluster probabilities $p_k$ are equal to $p$. They find a characterizing condition for weakly consistent recovery in terms of $p$, $q$, and $n$; this condition was rediscovered in the case of the binary SBM by Mossel, Neeman and Sly \cite{mossel2014consistency}; for this latter case it can be stated as
  \beqa\label{eqn:weak_recovery}
   n \frac{(p-q)^2}{p + q} \rightarrow
  \infty~.
  \eeqa
\cite{yun2014community} is the first to give a lower bound
  on the threshold. In their studied case this lower bound coincides with the
  upper bound, which they show 
   by providing a spectral algorithm (based on an algorithm by
  Coja-Oghlan \cite{coja2010graph}) with a simpler analysis.

Previous to their results, there have been other
  methods/algorithms to show the possibility of weakly consistent recovery;
  although the algorithms used may be even simpler (e.g., 
  Rohe, Chatterjee, Yu \cite{rohe2011spectral}, which is spectral), they generally do
  not come close to the threshold. 

Previously, weakly consistent recovery has been studied by Rohe, Chatterjee, Yu \cite{rohe2011spectral} using a spectral algorithm (based on an algorithm by Coja-Oghlan \cite{coja2010graph} with a simpler analysis), but the results do not come close to the threshold where $p,q$ is required to be almost $O(1)$(up to logarithmic factors). 


Recently, Zhang and Zhou \cite{zhangminimax15} obtained similar result as (\ref{eqn:weak_recovery}) under approximately same-sized communities, with the smallest inter-cluster connectivity parameter $p$ and the highest intra-cluster connectivity parameter $q\,$, by adopting a minimax approach. They show that weak recovery is possible if 
	\beqas
	\frac{n(p-q)^2}{pK\log K}\to \infty,
	\eeqas
and is impossible if
	\beqas
	\frac{n(p-q)^2}{pK}=O(1)
	\eeqas
where $K$ is the number of clusters which is allowed to grow. 
Later, \cite{gao2015achieving} proposed a computationally feasible algorithm that provably achieves the optimal misclassification proportion given above. 

\item \textbf{Partial Recovery.} 
 Coja-Oghlan \cite{coja2010graph} extended the asymptotic analysis of SBM to bounded degree regimes and was the first to give \emph{partial recovery} results. For the binary SBM case, his conditions amount roughly
  to the following: for $p=a/n$ and $q = b/n$ for some constants $a,b$, there exists
  some large constant $C$ such that, if $(a-b)^2 \geq C (a+b)
  \log(a+b)$, then partial recovery is possible, and the fraction of
  recovered vertices is upper bounded by a function of $C$. 
Following \cite{coja2010graph}, a series of works by \cite{decelle2011asymptotic, mossel2014reconstruction, massoulie_proof, mossel2013proof} established a sharp threshold for {\em detection} in binary SBM. 
Decelle et al \cite{decelle2011asymptotic} conjectured a sharp threshold at $(a-b)^2=2(a+b)\,$, based on non-rigorous ideas from statistical physics. 
Later, \cite{mossel2014reconstruction} showed 
that below this threshold it is impossible to cluster, or even to estimate the model parameters from the graph.  
Finally, \cite{massoulie_proof,mossel2013proof} provided an algorithm which efficiently outputs a labeling that is correlated with the true community assignment when $(a-b)^2 >2(a+b)\,$. 
Mossel, Neeman and Sly \cite{mossel14belief} proposed an algorithm using a variant of belief propagation that is optimal in the sense that if $(a-b)^2> C(a + b)$ for some constant $C$ then the algorithm achieves the optimal fraction of nodes labelled correctly. 
\dgreen{
}

For the general SBM in the bounded average degree regime, recently, Guedon and Vershynin \cite{guedon2014community} analyzed a convex optimization based approach, and Le, Levina, and Vershynin \cite{le2015sparse} analyzed a simple spectral algorithm, achieving similar upper bounds on the threshold of partial recovery. 
The proofs make use of the Grothendieck inequality. 
\cite{guedon2014community} offers a convex optimization approach for obtaining a correct
labeling of a $(1-\epsilon)$ fraction of the vertices for arbitrarily small $\epsilon$. 
The particular formulation of the convex problem is not
crucial and can be changed without significant change to the bound
itself. However, it is unclear how their results evolve when the networks have
unbounded average degrees. 

Le, Levina, and Vershynin \cite{le2015sparse} proposed a spectral method with degree correction when the average degree regime of the network is bounded. As a result of the degree correction, the graph Laplacian concentrates (which otherwise does not, in the bounded average degree regime) and hence the leading eigenvectors of the Laplacian can be used to approximately recover the labels. 
A similar degree correction trick was adopted in \cite{qin2013regularized}. 
It should be noted that in \cite{rohe2011spectral}, the authors used the fact that although the Laplacian does not concentrate, the square of the Laplacian does, and obtained good partial solutions in a much denser regime (smallest degree being $O(n/\log n)$).


\item \textbf{Detectability/Impossibility.} 
%
%
{
As mentioned above, for the binary SBM with $p=a/n$ and $q=b/n\,$, Decelle et al \cite{decelle2011asymptotic} conjectured that if $(a-b)^2 < 2(a + b)$ 
one cannot infer the community assignments with better than 50\% accuracy which can be achieved by random guessing. 
The conjecture was later verified by \cite{mossel2014reconstruction} as pointed out above.  
For the symmetric SBM with $r$ equivalent communities (of the same size and connection probabilities), the strongly empirically-supported conjecture of Decelle et al \cite{decelle2011asymptotic} states that 
  {when $(a-b)^2<c(r) (a+(r-1)b)$ for some $c(r)\leq r$}, 
  the model is indistinguishable from a general \ER model; e.g. see Conjecture 7.2 in \cite{mossel2014reconstruction} for details. 
}
\end{enumerate}

As mentioned in the beginning of this section, it has been proven that there is no gap between the information-theoretic and computational thresholds for binary SBM. On the other hand, while the information-theoretic threshold for partial recovery of more than 2 communities is still unknown, \cite{mossel2014reconstruction} conjectured a gap exists for partial recovery for more than 4 communities. Similarly, sharp thresholds for exact recovery of multiple communities are still unknown (see \cite{abbe2015community}).

In addition to the papers mentioned above, the interested reader will find good surveys of current literature in \cite{chen2014statistical,abbe2015community, amini2014semidefinite, mossel14belief, mossel2014consistency}.


\subsection{This paper}  \label{sec:this-paper}

In this paper we study the general setup presented in Section \ref{sec:GSBM-def}, where the communities are not constrained to have the same size and connection probabilities, and where $r$ is allowed to grow with $n$. Our work is {\em concerned with exact recovery} and is based on \cite{chen2014statistical}. We provide the following:
\begin{itemize}
\item An information-theoretic lower bound, describing an impossibility regime (Theorem \ref{thm:impossibility}), 
\item An upper bound, describing a potentially ``hard" regime in which recovery is always possible,
  though not necessarily in an efficient way (Theorem \ref{thm:hard-recovery}). Here we assume the sizes of the communities $n_k\,$, for $k=1,\ldots,r\,$, are known. 

\item An upper bound for efficient recovery via a convex optimization algorithm similar to the one in \cite{chen2014statistical}, describing an ``easy" regime (Theorems \ref{thm:convex_recovery} and \ref{thm:convex_recovery2}). Here we assume the quantity $\sum_k n_k^2$ is known. 


\item A bound characterizing a very simple and efficiently solvable
  thresholding algorithm, if model parameters $p_k,q$ are known (Theorem \ref{thm:simple-recovery}). 

\item Extensions of the above bounds to the case of partial
  observations, i.e., when each entry of the matrix is observed uniformly with
  some probability $\gamma$ and the results recorded.
\end{itemize}

%
%
%
%
%

Our setup is general and allows for any mix of clusters of all magnitudes and
densities. We illustrate the importance of considering such a model, as opposed to using summary statistics such as $n_{\min}$ and $p_{\min}\,$, by some examples later in this section. 
This setup allowed us to identify the crucial quantities 
\beqa\label{def:relative_density}
\rho_k = n_k (p_k - q)\,,\quad \chidiv(p_k,q) = \frac{(p_k-q)^2}{q(1-q)}  \,,\quad \chidiv(q,p_k) = \frac{(p_k-q)^2}{p_k(1-p_k)}, 
\eeqa
where $\rho_k$ is called the \emph{relative cluster density} for a cluster $k\,$, and $\chidiv$ represents the Chi-square divergence between two Bernoulli random variables with the given probabilities. We elaborate on these quantities in the beginning of Section \ref{sec:main-results}. 
The bounds resulting from our inequalities
bear resemblance to, and appear to be
generalizations of McSherry's \cite{mcsherry2001spectral}, allowing for the different
$n_k$'s and $p_k$'s. It is worth mentioning that we have explored the possibility of
allowing for a whole matrix of inter- and intra-cluster connectivity
probabilities (in other words, we looked at the case when instead of a
uniform probability $q$ of inter-cluster connection, we have different
connectivity probabilities $q_{kl}$ for each pair of clusters $(k,l)$, for $k
\neq l$.) The calculations can be followed through but at the cost of added notation complexity, with no clear shortcut, which we decided not to pursue. 

Our results 
cover a wider set of cases than present in the existing literature. 
We give illustrative examples in Section \ref{sec:examples} to show that the setup we
consider and the results we obtain represent a clear
improvement over previous work. 
The examples emphasize how  
Theorems \ref{thm:convex_recovery}, \ref{thm:convex_recovery2} and \ref{thm:hard-recovery} (given in Section \ref{sec:main-results} with proofs and more details given in Appendices \ref{app:proof-convex}, \ref{app:proof-rec}), complement each other, 
and how they compare and contrast with existing literature. 
More details and justification for the claims made in the examples are given in Appendix \ref{app:verification}. 

\subsection{Examples}	\label{sec:examples}
In the following, a {\em configuration} is a list of cluster sizes $n_k$, their connection probabilities $p_k$, and the inter-cluster connection probability $q\,$.  A triple $(m,p,k)$ indicates $k$ clusters of size
$m$ each, with connectivity parameter $p\,$. We do not
worry about whether $m$ and $k$ are always integers; if they are not,
one can always round up or down as needed so that the total number of
vertices is $n$, without changing the asymptotics. Moreover, when the $O(\;)$ notation is used, we mean that appropriate constants can be determined. 

\newcommand{\ck}{$\checkmark$}
\newcommand{\no}{$\times$}
\begin{table}[htbp]
\begin{center}
\small
\begin{tabular}{l | l | ccc}
	&  & convex recovery & convex recovery & recoverability  \\
	& importance & by Thm.~\ref{thm:convex_recovery}& by Thm.~\ref{thm:convex_recovery2} 	& by Thm.~\ref{thm:hard-recovery}  \\[3pt] \hline
Ex.~\ref{ex:we-can1} 		&counter-example for $(p_{\min},n_{\min})$ & \no& \no&\ck \\
Ex.~\ref{ex:we-can2} 		&counter-example for $(p_{\min},n_{\min})$ &\ck&\ck&\ck \\
Ex.~\ref{ex:cvx-thm1-sqrtlogn} 	&$n_{\min}=\sqrt{\log n}$ &\ck&\no&\no \\
Ex.~\ref{ex:cvx-thm2-slogn} 	&$n_{\max}=O(n)$, many small clusters 
						&\ck 
					        	&\ck  
					        	&\ck \\
Ex.~\ref{ex:cvx-thm2-logn} 	&$n_{\min}=O(\log n)$, spread in sizes &\no&\ck&\ck \\
Ex.~\ref{ex:hard} 	& small $p_{\min}-q\,$, all $p_k,q$ are $O(1)$ 
				&\ck
				&\ck 
				&\ck \\
\end{tabular}
\end{center}
\caption{A summary of examples in Section \ref{sec:this-paper}. Each row gives the important aspect of the corresponding example as well as whether, under appropriate regimes of parameters, it would satisfy the conditions of the theorems proved in this paper.}
\label{tab:examples}
\end{table}%

\subsubsection{Counter-examples for the $(p_{\min},n_{\min})$ heuristic} \label{sec:counter-ex}
In a heterogenous setup, one might think one can plug in $(p_{\min},n_{\min})$ in the results for homogenous SBM to identify recoverability regimes. 
While this simplistic approach will indeed yield some upper bounds on some of the ``positive" thresholds (i.e. if you can solve it for the simplistic case, you can also solve it for the more complex one, 
\emph{it can completely fail to correctly identify solvable subregimes}. 
The first two examples show why such a heuristic used for generalization attempts in the literature is not useful enough.  


\begin{example}	\label{ex:we-can1}
Suppose we have two clusters of sizes $n_1 = n -\sqrt{n}$, $n_2 = \sqrt{n}$,
  with $p_1 = n^{-2/3}$ and $p_2 = 1/\log n$ while $q =
  n^{-2/3-0.01}\,$. As we will see, the bounds we obtain here in
  Theorem \ref{thm:hard-recovery} make it clear that this case is theoretically
  solvable (in the \emph{hard} regime). 
By contrast, Theorem 3.1 in \cite{cai2014robust} (specialized for the case of no outliers), requiring
\begin{equation}\label{eq:CL14-pmin-nmin}
n_{\min}^2(p_{\min}-q)^2 \gtrsim (\sqrt{p_{\min}n_{\min}} + \sqrt{nq})^2\log n \,,
\end{equation}
would fail and provide no guarantee for recoverability. 
\end{example}

\begin{example} 	\label{ex:we-can2}
Consider a configuration as
\beqas
\tri{n - n^{2/3}}{n^{-1/3+ \epsilon}}{1} \;\;,\;\;
\tri{\sqrt{n}}{O(\tfrac{1}{\log n})}{n^{1/6}} \;\;,\;\;
q = n^{-2/3+ 3 \epsilon},
\eeqas
where $\epsilon$ is some small quantity, e.g. $\epsilon = 0.1\,$. 
Either of Theorems \ref{thm:convex_recovery} and \ref{thm:convex_recovery2} verify that this case is in the \emph{easy} regime, and the partition can be recovered efficiently by solving a convex program, with high probability. By
contrast, using the $p_{\min} = n^{-1/3+\epsilon}$ and $n_{\min}=
\sqrt{n}$ heuristic, neither the condition of Theorem 3.1 in \cite{cai2014robust} (given in \eqref{eq:CL14-pmin-nmin}) nor the condition of Theorem 2.5 in \cite{chen2014statistical} is fulfilled, and thus we have no means
of reaching the same conclusion based on the $(p_{\min},n_{\min})$ heuristic.
\end{example}


\subsubsection{Cluster sizes: small, large, and in-between} 
    The next three examples attempt to provide an idea of how wide the spread of cluster sizes can be, as characterized by our results. 
Most algorithms for clustering the SBM run into the problem of small clusters
\cite{chen2012clustering,boppana1987eigenvalues,mcsherry2001spectral},
often because the models employed do not allow for enough
parameter variation to identify the key quantities involved. 
{The bounds we obtain in this paper indicate that the ``correct" parameters are not the pairs $(p_k,n_k)$, but rather the relative cluster densities $\rho_k  = (p_k-q) n_k$ (which are related to the ``effective densities'' appearing in \cite{VinayakOH14}). This allows us to significantly vary the sizes of the clusters, and still be able to obtain exact recovery, as long as the relative densities are large enough. }

\begin{example}[ (smallest cluster size for convex recovery)] \label{ex:cvx-thm1-sqrtlogn}
Consider a configuration as 
\beqas
\tri{\sqrt{\log n}}{O(1)}{m} \;\;,\;\; 
\tri{n_2 }{O(\tfrac{\log n}{\sqrt{n}})}{\sqrt{n} } \;\;,\;\;
q = O(\tfrac{\log n}{n}),
\eeqas
where $n_2 = \sqrt{n} - m \sqrt{\log n / n}$ to ensure a total of $n$ vertices. Here, we assume $m\leq n/(2\sqrt{\log n})$ which implies $n_2 \geq \sqrt{n}/2\,$. 
It is straightforward to verify the conditions of Theorem \ref{thm:convex_recovery}. 
Notice that, in verifying the first condition for the second group of
clusters (with $p_2 = O(\tfrac{\log n}{\sqrt{n}})$), we need $p_2n_2 \gtrsim \log n_2$, which is
satisfied when $m$ is a constant. 



There are two important things to note in this example. First, to our knowledge, \emph{this is the first example in the literature for which SDP-based recovery works and allows the recovery of (a few) clusters of size smaller than $\log n$.} Previously, $\log n$ was considered to be the standard bound on the cluster size for exact recovery, as illustrated by Theorem 2.5 of \cite{chen2014statistical} in the case of equivalent clusters. 
We have thus shown that it is possible, in the right
circumstances (when sizes are spread and the smaller the cluster, the
denser), to recover very small clusters (up to $\sqrt{\log n}$ size),
\emph{if there are just a few of them (at most polylogarithmic in $n$).} 
The significant improvement we made in the bound on the size of the smallest
cluster is due to the fact that we were able to perform a closer
analysis of the SDP machinery (which we provide in the proof of Theorem
\ref{thm:convex_recovery}). For more details, see Section \ref{app:proof-convex_recovery}. 

Secondly, 
the condition of Theorem \ref{thm:hard-recovery} is {\em not} satisfied. This is not an inconsistency (as Theorem \ref{thm:hard-recovery} gives only an upper bound for the threshold), but indicates the limitation of this theorem in characterizing all recoverable cases. 
\end{example}


\paragraph{Spreading the sizes.} 
The previous example allows us to go lower than the standard $\log n$
bound on the cluster size for exact recovery; however, we can solve
only if the number of very small clusters is finite. 
On the other hand, Theorem \ref{thm:convex_recovery2} provides us with
the option of having many small clusters but requires the smallest
cluster to be of size $O(\log n)\,$. Since the maximum cluster size is
$O(n)$, one may ask what kind of a spread can be achieved with the
help of Theorem \ref{thm:convex_recovery2}. 
In Example \ref{ex:cvx-thm2-slogn}, we assume a cluster of size $O(n)$ and examine how small $n_{\min}$ can be for Theorem \ref{thm:convex_recovery2} to guarantee exact recovery by the convex program. 
Similarly, in Example \ref{ex:cvx-thm2-logn}, we fix $n_{\min}=\log n$ and examine how large $n_{\max}$ can be. 

\begin{example} \label{ex:cvx-thm2-slogn}
Consider a configuration where small clusters are dense and we have a big cluster, 
\[
\tri{\tfrac{1}{2}n^\epsilon}{O(1)}{n^{1-\epsilon}} \;\;,\;\;
\tri{\tfrac{1}{2}n}{n^{-\alpha}\log n}{1} \;\;,\;\;
q = O(n^{-\beta}\log n), 
\]
with $0<\epsilon<1$ and $0<\alpha< \beta<1$.
Then the conditions of Theorems \ref{thm:convex_recovery} and \ref{thm:convex_recovery2} both require that
\begin{align} \label{eq-condn-ex:cvx-thm2-slogn} 
\tfrac{1}{2}(1-\alpha) <\epsilon<2(1-\alpha) \quad,\quad  \epsilon> 2\alpha-\beta
\end{align}
and are depicted in Figure \ref{fig:spread-size}. Since we have not specified the constants in our results, we only consider strict inequalities. 

\begin{figure}[htbp]
	\newcommand{\tickscl}{0.6}
	\newcommand{\textscl}{0.7}
\psfrag{right}[cc][cc][\textscl][-60]{$2\alpha+\epsilon=2$}
\psfrag{bottom}[cc][cc][\textscl][5]{$\alpha+2\epsilon=1$}
\psfrag{front}[tc][cc][\textscl][15]{$2\alpha=\beta+\epsilon$}
\psfrag{alpha}[lt][cb][\textscl]{~~~~~$\alpha$}
	\psfrag{0a}[cc][cc][\tickscl]{$0$}
	\psfrag{1a}[cc][cc][\tickscl]{$0.25$}
	\psfrag{2a}[cc][cc][\tickscl]{$0.5$}
	\psfrag{3a}[cc][cc][\tickscl]{$0.75$}
\psfrag{beta}[lt][cc][\textscl]{~~~~~$\beta$}
	\psfrag{0b}[cc][cc][\tickscl]{$0$}
	\psfrag{1b}[cc][cc][\tickscl]{$1/3$}
	\psfrag{2b}[cc][cc][\tickscl]{$2/3$}
	\psfrag{3b}[cc][cc][\tickscl]{$1$}
\psfrag{eps}[cb][cc][\textscl]{$\epsilon$}
	\psfrag{0e}[cc][cc][\tickscl]{$0$}
	\psfrag{1e}[cc][cc][\tickscl]{$0.2$}
	\psfrag{2e}[cc][cc][\tickscl]{$0.4$}
	\psfrag{3e}[cc][cc][\tickscl]{$0.6$}
	\psfrag{4e}[cc][cc][\tickscl]{$0.8$}
	\psfrag{5e}[cc][cc][\tickscl]{$1$}
\psfrag{A}[cc][cc][\textscl]{} 
\psfrag{B}[cc][cc][\textscl]{} 
\psfrag{C}[cc][cc][\textscl]{} 
\psfrag{D}[cc][cc][\textscl]{} 
\begin{center}
\vskip -1em
\includegraphics[scale=.5]{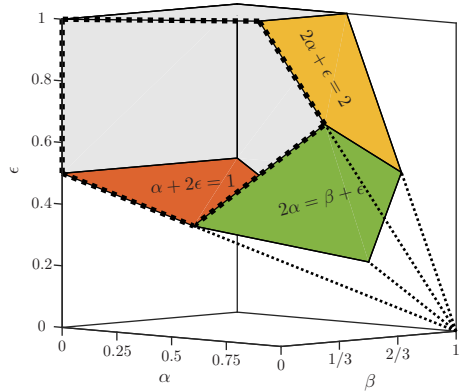}
\vskip -2em
\caption{The space of parameters in Equation \ref{eq-condn-ex:cvx-thm2-slogn}. The face defined by $\beta=\alpha$ is shown with dotted edges. The three gray faces correspond to $\beta=1\,$, $\alpha=0$ and $\epsilon=1\,$.  
The green plane (corresponding to the last condition in \eqref{eq-condn-ex:cvx-thm2-slogn}) comes from controlling the intra-cluster interactions uniformly (see \eqref{eq:Bkk-bound-before} and \eqref{eq:Bkk-bound}) which might be only an artifact of our proof and can be possibly improved. 
}
\label{fig:spread-size}
\end{center}
\end{figure}

Notice that the small clusters are as
dense as can be, but the large one is not necessarily very dense. By
picking $\epsilon$ to be just over $1/4$, we can make $\alpha$ just
shy of $1/2$, and $\beta$ very close to $1$. As far as we can tell,
there are no results in the literature surveyed that cover such a case, although the 
clever
``peeling'' strategy introduced in \cite{ailon2013breaking} would recover the largest cluster. 
The strongest result in \cite{ailon2013breaking} that seems applicable here
is Corollary 4 (which works for non-constant probabilities). The
\cite{ailon2013breaking} algorithm works to recover a large cluster
(larger than $O(\sqrt{n} \log^2n)$), subject to existence of a gap in the cluster sizes (roughly, there
should be no cluster sizes between $O(\sqrt{n})$ and $O(\sqrt{n}
\log^2n)$). Therefore, in this example, after a single iteration, the
algorithm will stop, despite the continued existence of a gap, as there
is no cluster with size above the gap. Hence the ``peeling''
strategy on this example would fail to recover all the clusters.



\end{example}

%

\begin{example}	\label{ex:cvx-thm2-logn}
Consider a configuration with many small dense clusters. We are
interested to see how large the spread of cluster sizes can be for the
convex recovery approach to work. As required by Theorems \ref{thm:convex_recovery} and \ref{thm:convex_recovery2}
and to control $\sigma_{\max}$ (defined in \eqref{def:sigmax})
the larger a cluster, the smaller its connectivity
probability should be; therefore we choose the
largest cluster at the threshold of connectivity (required for recovery). 
Consider the following cluster sizes and probabilities: 
\[
\tri{\log n}{O(1)}{\tfrac{n}{\log n}- m \sqrt{\tfrac{n}{\log n}}} \;\;,\;\;
\tri{\sqrt{n\log n}}{O(\sqrt{\tfrac{\log n}{n}})}{m} \;\;,\;\;
q = O(\tfrac{\log n}{n}),
\]
where $m$ is a constant. Again, we round up or down where necessary
to make sure the sizes are integers and the total number of vertices
is $n$. All the conditions of Theorem \ref{thm:convex_recovery2} are
satisfied, hence we conclude that exact convex recovery is
possible in this case. 

Note that the last condition of Theorem \ref{thm:convex_recovery} is not
satisfied since there are too many small clusters. Also note that alternate methods proposed in the literature
surveyed would not be applicable; in particular, the
gap condition in \cite{ailon2013breaking} is not satisfied for this case from the
start. 
\end{example}


\subsubsection{Closeness of $p_{\min}$ and $q$}
Finally, the following examples illustrate how small $p_{\min}-q$ can be in order for the recovery, respectively, the convex recovery algorithms to still be guaranteed to work. Note that the difference in $p_{\min} - q$ for the two types of recovery is noticeable, indicating that there is a significant difference between what we know to be recoverable and what we can recover efficiently by our convex method. 
We consider both dense graphs (where $p_{\min}$ is $O(1)$) and sparse ones. 

\begin{example}	\label{ex:hard}
Consider a configuration where all of the probabilities are of $O(1)$ and
\beqas
\tri{n_1}{p_{\min}}{1} \;\;,\;\;
\tri{n_{\min}}{p_2}{1} \;\;,\;\;
\tri{n_3}{p_3}{\tfrac{n - n_1-n_{\min}}{n_3}} \;\;,\;\;
q = O(1),  
\eeqas
where $p_2-q$ and $p_3-q$ are $O(1)\,$. On the other hand, we assume $p_{\min}-q=f(n)$ is small. For recoverability by Theorem \ref{thm:hard-recovery},  we need $f(n) \gtrsim (\log n)/n_{\min}$ 
and $f^2(n) \gtrsim (\log n)/n_1\,$. Notice that, since $n\gtrsim n_1 \gtrsim n_{\min}\,$, we should have $f(n) \gtrsim \sqrt{{\log n}/{n}}\,$.  
For the convex program to recover this configuration (by Theorem \ref{thm:convex_recovery} or \ref{thm:convex_recovery2}), we need $n_{\min}\gtrsim \sqrt{n}$ and $f^2(n) \gtrsim \max\{n/n_1^2\,,\, \log n/ n_{\min}\}\,$, while all the probabilities are $O(1)\,$.

\end{example}

For a similar configuration to Example \ref{ex:hard}, where the probabilities are not $O(1)\,$, 
recoverability by Theorem \ref{thm:hard-recovery} requires $f(n) \gtrsim \max\{\sqrt{p_{\min}(\log n)/n}\,,\, n^{-c}\}$ for some appropriate $c>0\,$.  

Note that if all the probabilities, as well as $p_{\min}-q\,$, are $O(1)$, then by Theorem \ref{thm:hard-recovery} all clusters down to a logarithmic size should be recoverable. However, the success of convex recovery is guaranteed by Theorems \ref{thm:convex_recovery} and \ref{thm:convex_recovery2} when $n_{\min}\gtrsim \sqrt{n}\,$.

\section{Main Results}	\label{sec:main-results}
In this paper, we consider the heterogenous stochastic block model described in Section \ref{sec:GSBM-def}. Consider a partition of the $n$ nodes into $V_0,V_1,\ldots,V_r\,$, where $\abs{V_k} = n_k\,$, $k=0,1,\ldots,r\,$. Consider $\bar n = \sum_{k=1}^r n_k$ and denote  the number of isolated nodes by $n_0$ ; hence, $n_0+\bar n = n\,$. Ignoring $n_0\,$, we further define $n_{\min} = \min\{n_k:\; k=1,\ldots,r\}$ and $n_{\max} = \max\{n_k:\; k=1,\ldots,r\}\,$.
The nodes in $V_0$ are isolated and the nodes in $V_k$ form the community $\Cc_k = V_k\times V_k\,$, for $k=1,\ldots, r\,$. The union of communities is denoted by $\Cc = \cup_{k=1}^r \Cc_k$ and $\Cc^c$ denotes the complement; i.e. $\Cc^c = \{(i,j):\; (i,j)\not\in\Cc_k \text{ for any } k = 1,\ldots, r,\text{ and } i,j = 1,\ldots,n\}$. 
Denote by $\Yc$ the set of admissible adjacency matrices according to a community assignment as above, i.e. 
\[
\Yc:=\{Y\in\{0,1\}^{n\times n}:\; Y \text{ is a valid clustering matrix over the partition } V_0,V_1,\ldots, V_r \text{ where }\abs{V_k}=n_k \}  \,.
\]
We will denote by $\mathbf{1}_C\in\mathbb{R}^{n\times n}$ a matrix which is 1 on $C\subset\{1,\ldots,n\}^2$ and zero elsewhere. $\log$ denotes the natural logarithm (base $e$), and the notation $\theta\gtrsim 1$ is equivalent to $\theta \geq O(1)\,$. A Bernoulli random variable with parameter $p$ is denoted by $\ber(p)\,$, and a Binomial random variable with parameters $n$ and $p$ is denoted by $\bin(n,p)\,$. 

Consider a distribution over random graphs with $V$ as their node set as defined in \eqref{eq:rand-graph-dist}. 
Each subset $V_k$ is endowed with an \ER graph structure $\mathcal{G}(n_k, p_k)$ for $k=1,\ldots,r\,$, and an edge is drawn between two nodes in different communities, independent of other edges, with probability $q\,$.  We assume that $p_k\geq q$ for $k=1,\ldots,r\,$. The goal is to recover the underlying clustering matrix $Y^\st$ exactly given a single graph drawn from this distribution. 
We will need the following definitions:
\bit
\item Define the {\em relative density of a community} as
\beqas
\rho_k=(p_k-q)n_k
\eeqas
which gives $\sum_{k=1}^r \rho_k = \sum_{k=1}^r p_k n_k - qn\,$. 

\item 
The Neyman Chi-square divergence (e.g., see \cite{cressie1984multinomial}) 
between two discrete random variables $\mu$ and $\pi$ (on the same support set of size $t$) is defined as 
\[
\fdiv{\chi_N^2}{\mu}{\pi}
= \sum_{i=1}^t \frac{\mu_i^2}{\pi_i}-1 
=  \sum_{i=1}^t \frac{(\mu_i-\pi_i)^2}{\pi_i} 
\]
and is always bounded below by the KL divergence; due to $\log x \leq x-1\,$. 
In the case of two Bernoulli random variables $\ber(p)$ and $\ber(q)\,$, the Neyman Chi-square divergence is given by
\begin{equation*}
\chidiv(p,q) := \frac{(p-q)^2}{q(1-q)}
\end{equation*}
and we have $\chidiv(p,q)\geq \kldiv(p,q):=\kldiv(\ber(p),\ber(q))\,$; see \eqref{eqn:KL-ineq}. 
Moreover, for $q<p\,$, when both $p$ and $q/p$ are bounded away from $1\,$, we have 
\begin{align}\label{eq:chidivqp-approx-p}
\chidiv(q,p) = p \frac{(1-q/p)^2}{1-p} \approx p \,.
\end{align}
Chi-square divergence is an instance of a more general family of divergence functions called $f$-divergences or Ali-Silvey distances \cite{ali1966general}. 
This family also has KL-divergence, total variation distance, Hellinger distance and Chernoff distance as special cases. Moreover, the divergence used in \cite{abbe2015community} is an $f$-divergence. 

\item 
Define the total variance $\sigma_k^2 = n_kp_k(1-p_k)$ over the $k$th community, and let $\sigma_0^2 = nq(1-q)\,$. Also, define
\begin{align}\label{def:sigmax}
\sigma_{\max}^2 = \max_{k=1,\ldots,r}\; \sigma_k^2 =  \max_{k=1,\ldots,r}\; n_kp_k(1-p_k)  \,.
\end{align}
\eit

\subsection{Convex Recovery}	\label{sec:convex}


We consider a convex optimization program for recovering the underlying clustering matrix $Y^\st = \sum_{k=1}^r \mathbf{1}_{\Cc_k}$ and characterize the models that are exactly recoverable using this program. In the following, $\norm{\cdot}_\st$ denotes the matrix nuclear norm or trace norm, i.e., the sum of singular values of the matrix. The dual to the nuclear norm is the spectral norm, denoted by $\norm{\cdot}\,$. 

\beqa \label{proc:convex-recovery}
\fbox{
\begin{minipage}[c][11.5em][c]{0.42\textwidth}{
{
\bf Convex Recovery
} 
\vskip.6em
\begin{algorithmic}
\STATE {\bf input:} $\sum_{k=1}^r n_k^2$ \\[.3em]
\STATE {\bf output:} 
\beqas
\begin{array}{lll}
\hat Y = 
&\arg\underset{Y}{\max} 	& \sum A_{ij}Y_{ij} \\
&\mathrm{subject\; to}	& \norm{Y}_\st\leq \norm{Y^\st}_\st=n \\
&					& \sum_{i,j}Y_{ij}=\sum_{k}n_k^2 \\
&					& 0\leq Y_{ij}\leq 1
\end{array}
\eeqas
\end{algorithmic}
}\end{minipage}}
\eeqa

We prove two theorems giving conditions under which the above convex program outputs the true clustering matrix with high probability. While the theorems are similar in terms of the methodology used, they differ in terms of the conditions we must impose. As we will see, Theorem \ref{thm:convex_recovery} allows us to describe a regime in which \emph{tiny} communities of size $O(\sqrt{\log n})$ are recoverable (provided that they are very dense and that only few tiny or small clusters exist; see Example \ref{ex:cvx-thm1-sqrtlogn}), while Theorem \ref{thm:convex_recovery2} covers a less restrictive regime in terms of cluster sizes, but allows us to recover clusters only down to $O(\log n)\,$; see Example \ref{ex:cvx-thm2-logn}. The proofs for both theorems along with auxiliary lemmas are given in Appendix \ref{app:proof-convex}. 

\begin{theorem}\label{thm:convex_recovery}
Under the heterogenous stochastic block model, 
the output of convex recovery program in \eqref{proc:convex-recovery} coincides with $Y^\st\,$ with high probability, provided that
\beqas	
\rho_k^2  \gtrsim  \sigma_k^2 \log n_k \;\;,\;\;
\chidiv(p_{\min},q) \gtrsim \tfrac{\log n_{\min}}{n_{\min}}  
\;\;,\;\;
\rho_{\min}^2 \gtrsim    \max\{\sigma_{\max}^2\,,\,nq(1-q)\,,\,  \log n  \} \;\;,\;\;
\sum_{k=1}^r n_k^{-\alpha} =  o(1)
\eeqas
for some $\alpha>0\,$, where $\sigma_k^2 = n_kp_k(1-p_k)\,$. 

\end{theorem}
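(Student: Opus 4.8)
The plan is to establish exact recovery by the standard dual-certificate / KKT route for the convex program in \eqref{proc:convex-recovery}: $Y^\st$ is the unique optimum with high probability if there exists a matrix in the subdifferential of the objective-plus-constraints that certifies optimality. Concretely, I would write down the Lagrangian/optimality conditions for the linear program with the nuclear-norm ball, the row-sum constraint $\sum_{ij}Y_{ij}=\sum_k n_k^2$, and the box constraints $0\le Y_{ij}\le 1$; then construct explicit dual variables (a Lagrange multiplier $\lambda$ for the trace constraint, a multiplier for the sum constraint, and entrywise multipliers for the box). The natural candidate for the nuclear-norm dual certificate is built from the block structure of $Y^\st=\sum_k\mathbf 1_{\Cc_k}$, whose singular vectors are the normalized indicator vectors $\mathbf 1_{V_k}/\sqrt{n_k}$; the certificate should look like $A - q\mathbf 1\mathbf 1^\top$ projected appropriately, corrected on the community blocks. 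First I would reduce exact recovery to a deterministic inequality: feasibility of $Y$ and a strict inequality $\iprod{A}{Y^\st - Y} > 0$ for all admissible $Y\ne Y^\st$, which after taking the dual becomes a spectral-norm bound on a random "centered adjacency" matrix restricted to appropriate subspaces.

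Second, I would split $A = \E A + (A - \E A)$, where $\E A$ has the exact block structure (with $p_k$ on $\Cc_k$ and $q$ off-diagonal), and analyze the fluctuation $Z := A - \E A$. The community $\Cc_k$ contributes a mean signal of order $\rho_k = (p_k - q)n_k$ along the direction $\mathbf 1_{V_k}$; the noise restricted to that block has spectral norm $O(\sigma_k\sqrt{\log n_k})$ by a Bernstein/matrix-concentration bound (this is exactly where the first hypothesis $\rho_k^2\gtrsim \sigma_k^2\log n_k$ enters — it guarantees the per-community signal beats the per-community noise). The cross terms and the global off-diagonal noise have spectral norm controlled by $\sqrt{nq(1-q)}$ and $\sigma_{\max}$, which is where the third hypothesis $\rho_{\min}^2 \gtrsim \max\{\sigma_{\max}^2, nq(1-q), \log n\}$ is used: it ensures the \emph{smallest} community's signal dominates all the global noise simultaneously. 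The $\chidiv(p_{\min},q)\gtrsim \log n_{\min}/n_{\min}$ condition is the connectivity-type requirement that within the smallest, possibly sparse community the row sums (degrees into the community) separate from the ambient row sums — this controls the entrywise (box) dual variables and guarantees each vertex is correctly attached to its own cluster rather than pulled toward $\Cc^c$. The last condition $\sum_k n_k^{-\alpha}=o(1)$ is a union-bound requirement: it lets us take a union over all $r$ communities (and over vertices) of the individual high-probability events, so that all the per-cluster concentration bounds hold simultaneously with probability $1-o(1)$.

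Third, I would assemble the pieces: on the event that all the above concentration bounds hold, verify that the constructed dual variables are feasible (nonnegativity of box multipliers, correct sign of $\lambda$) and that complementary slackness holds at $Y^\st$; strict feasibility of the certificate then upgrades optimality to \emph{uniqueness}. The key lemma to invoke here is a sharpened spectral-norm concentration for the blockwise-centered adjacency matrix — I expect this to be stated as an auxiliary lemma in Appendix \ref{app:proof-convex}, giving something like $\norm{Z_{\Cc_k}} \lesssim \sigma_k\sqrt{\log n_k}$ and $\norm{Z} \lesssim \sqrt{\sigma_{\max}^2 + nq(1-q)}\cdot\sqrt{\log n}$ up to constants.

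The main obstacle is the heterogeneity: the usual homogeneous analysis bounds everything by a single scalar, but here the certificate must be balanced cluster-by-cluster, so the construction of the dual certificate has to be adaptive to each $(n_k,p_k)$ and the error analysis has to be genuinely local (per-block) rather than global — in particular, getting the sharp $\sqrt{\log n_k}$ rather than a crude $\sqrt{\log n}$ in the per-cluster bound is what allows clusters as small as $\sqrt{\log n}$, and this requires the "closer analysis of the SDP machinery" the authors allude to: a careful decomposition of the certificate's action on each community's indicator subspace, its orthogonal complement within the community, and the global complement, with the interaction terms bounded by the $\rho_{\min}$ condition. A secondary delicate point is handling the box constraints and the fixed-sum constraint jointly so that the off-cluster entrywise multipliers are genuinely nonnegative; this is where $\chidiv(p_{\min},q)\gtrsim\log n_{\min}/n_{\min}$ does the work, and getting the constants to line up without assuming $p_{\min}$ bounded away from $0$ requires a Bernstein bound with the variance proxy $q(1-q)$ rather than a Hoeffding-type bound.
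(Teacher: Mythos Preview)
Your plan is essentially the paper's approach: reduce to $\langle A, Y^\st - Y\rangle > 0$ for all feasible $Y\neq Y^\st$, split $A=\E A + M$, exploit a subgradient of $\|\cdot\|_\st$ at $Y^\st$ to handle the nuclear-norm constraint, and then do per-cluster concentration with failure probability $O(n_k^{-\alpha})$ so that the union bound closes via $\sum_k n_k^{-\alpha}=o(1)$.

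Two technical points where the paper's execution differs from your description are worth flagging. First, the paper does \emph{not} build a full KKT certificate with explicit box multipliers. It uses only the nuclear-norm subgradient $UU^T+\tfrac{1}{\lambda}\mathcal P_{T^\perp}(M)$ (valid once $\|M\|\le\lambda$) to obtain $\langle M,Y^\st-Y\rangle\ge\langle \mathcal P_T(M)-\lambda UU^T,\,Y^\st-Y\rangle$, and then applies H\"older blockwise: the quantity that must be controlled is the \emph{entrywise} $\ell_\infty$ norm of $\mathcal P_T(M)-\lambda UU^T$ on each block $\Cc_k$ and on each off-block $V_{k'}\times V_{k''}$, not a per-block spectral norm. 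The $\sqrt{\log n_k}$ scaling comes from scalar Bernstein on the row/column averages $\tfrac{1}{n_k}\sum_{l\in V_k}M_{lj}$ appearing in $(\mathcal P_T(M))_{ij}$; the per-block spectral norm $\|M_{\Cc_k}\|$ is only $O(\sigma_k)$ with no logarithm and enters through the global bound $\lambda$. Second, your attribution of the $\chidiv(p_{\min},q)$ hypothesis is slightly off: it is not an intra-community connectivity condition (that is already implied by $\rho_k^2\gtrsim\sigma_k^2\log n_k$), but rather the bound needed for the \emph{off-diagonal} entrywise term $\mu_{k'k''}\lesssim\sqrt{q(1-q)\log n_{\min}/n_{\min}}$, i.e.\ the inter-cluster row-average noise with variance proxy $q(1-q)$. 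With these two adjustments your outline matches the paper's proof.
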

The assumption $\sum_{i=1}^r n_i^{-\alpha} =  o(1)$ above is tantamount to saying that
  the number of small or tiny communities (where by tiny we mean communities of size $O(\sqrt{\log n})$) cannot be too large (e.g., the
  number of polylogarithmic-size communities cannot be a power of
  $n$). In other words, one needs to have mostly large communities
  (growing like $n^{\epsilon}$, for some $\epsilon>0$) for this assumption to be satisfied. Note, however, that the condition does \emph{not} restrict the number of clusters
  of size $n^{\epsilon}$ for any fixed $\epsilon>0\,$. 
The second theorem imposes more stringent conditions on the relative
density, 
but relaxes the condition that only a very small number of nodes can be in small clusters. 
\begin{theorem}\label{thm:convex_recovery2}
Under the heterogenous stochastic block model, 
the output of convex recovery program in \eqref{proc:convex-recovery} coincides with $Y^\st\,$, with high probability, provided that
\beqas	
\rho_k^2  \gtrsim  \sigma_k^2 \log n \;\;,\;\;
\chidiv(p_{\min},q) \gtrsim \tfrac{\log n}{n_{\min}} 
 \;\;,\;\;
\rho_{\min}^2 \gtrsim   \max\{\sigma_{\max}^2\,,\, nq(1-q) \} \,.
\eeqas
\end{theorem}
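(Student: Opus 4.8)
The plan is to prove exact recovery by exhibiting a dual certificate for the linear program in \eqref{proc:convex-recovery}, adapting the primal--dual argument of \cite{chen2014statistical} to heterogeneous blocks. Let $U=[u_1,\dots,u_r]$ with $u_k=n_k^{-1/2}\mathbf 1_{V_k}$, so that $Y^\st=\sum_k\mathbf 1_{\Cc_k}=U\,\mathrm{diag}(n_1,\dots,n_r)\,U^\top$, $\norm{Y^\st}_\st=\sum_k n_k=n$, and $UU^\top=\sum_k n_k^{-1}\mathbf 1_{\Cc_k}$; write $J=\mathbf 1_{\{1,\dots,n\}^2}$, $s=\sum_k n_k^2$, and let $T$ be the tangent space at $Y^\st$ (the span of $\{UX^\top,XU^\top:X\in\mathbb R^{n\times r}\}$) with $P_T,P_{T^\perp}$ its orthogonal projections. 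The KKT conditions for $Y^\st$ to maximize $\iprod{A}{\cdot}$ over the feasible set reduce to finding $\mu\ge 0$ (dual to $\norm{Y}_\st\le n$), $\lambda\in\mathbb R$ (dual to $\iprod{J}{Y}=s$), and $W$ with $P_T W=0$ and $\norm{W}\le 1$ such that, setting $S:=A-\mu UU^\top-\mu W-\lambda J$, one has $S_{ij}\ge 0$ on $\Cc$ and $S_{ij}\le 0$ on $\Cc^c$; then $S|_{\Cc}$ and $-S|_{\Cc^c}$ are the required nonnegative box multipliers. If moreover $\norm{W}<1$ and these entrywise inequalities are strict, $Y^\st$ is the unique maximizer, which is exact recovery.

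The certificate is designed so that $\mu W$ cancels the fluctuation of $A$ transverse to $T$: take $W=\mu^{-1}\big(P_{T^\perp}(A-\E A)+R\big)$ with $R\in T^\perp$ a lower-order correction restoring certain block-sum identities, choose $\lambda$ strictly between $q$ and the $p_k$'s (e.g. $\lambda=q+\tfrac12(p_{\min}-q)$), and take $\mu$ of order $\rho_{\min}$. Using $A=\E A+P_T(A-\E A)+P_{T^\perp}(A-\E A)$, one gets $S=\big(\E A-\mu UU^\top-\lambda J\big)+P_T(A-\E A)-R$, whose deterministic part equals $p_k-\mu/n_k-\lambda$ on $\Cc_k$ and $q-\lambda<0$ on $\Cc^c$. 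Hence the sign conditions hold provided (i) $\mu/n_k\le c(p_k-q)$ for every $k$, i.e. $\mu\lesssim\rho_{\min}$, and (ii) the entries of $P_T(A-\E A)$ and of $R$ are, on each block, small relative to $p_k-q$.

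Two families of probabilistic estimates then close the argument. First, $\norm{W}\le\mu^{-1}\big(\norm{P_{T^\perp}(A-\E A)}+\norm{R}\big)\le\mu^{-1}\big(\norm{A-\E A}+\norm{R}\big)$, and standard spectral bounds for inhomogeneous random graphs (matrix Bernstein, with degree regularization in the sparsest regimes) give $\norm{A-\E A}\lesssim\sigma_{\max}+\sqrt{nq(1-q)}$ with high probability, the within-block parts contributing $\max_k\sqrt{n_kp_k(1-p_k)}=\sigma_{\max}$ and the ambient part behaving like $\mathcal G(n,q)$. Thus $\norm{W}\le 1$ forces $\mu\gtrsim\sigma_{\max}+\sqrt{nq(1-q)}$, which together with $\mu\lesssim\rho_{\min}$ is precisely $\rho_{\min}^2\gtrsim\max\{\sigma_{\max}^2,\,nq(1-q)\}$. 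Second, each entry of $P_T(A-\E A)$ (and of $R$) is a normalized sum of at most $n_k$ independent centered Bernoullis; a Bernstein bound plus a union bound over the $\le n$ nodes and $r\le n$ clusters yields block sums of size $\lesssim\sigma_k\sqrt{\log n}$ when $i\in V_k$ and $\lesssim\sqrt{n_kq(1-q)\log n}$ when $i\notin V_k$. Dividing by the eigenvalue $n_k$, requirement (ii) becomes $(p_k-q)^2\gtrsim\sigma_k^2\log n/n_k^2$ and $(p_k-q)^2 n_k\gtrsim q(1-q)\log n$, i.e. $\rho_k^2\gtrsim\sigma_k^2\log n$ and $\chidiv(p_k,q)\gtrsim\log n/n_k$, the latter following for the binding cluster from $\chidiv(p_{\min},q)\gtrsim\log n/n_{\min}$. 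The appearance of $\log n$ rather than $\log n_k$ is exactly the cost of this crude union bound over all clusters, which is why Theorem \ref{thm:convex_recovery2} is weaker than Theorem \ref{thm:convex_recovery} on this point while being free of the restriction $\sum_k n_k^{-\alpha}=o(1)$.

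The delicate step is the certificate construction itself: $W$ must simultaneously be a valid subgradient component ($P_T W=0$, $\norm{W}\le 1$) and make $\Theta(n^2)$ entrywise sign inequalities hold, and these demands pull against each other through $\mu$ — enlarging $\mu$ helps $\norm{W}\le 1$ but shrinks the margin $p_k-q-\mu/n_k$ — so the admissible window $\sigma_{\max}+\sqrt{nq(1-q)}\lesssim\mu\lesssim\rho_{\min}$ must be nonempty, which is where the third hypothesis is forced, and the correction $R$ must be chosen to kill the right block sums without inflating $\norm{W}$ or the residual entrywise terms. I expect this, together with the inhomogeneous spectral-norm bound (which in the sparsest regimes is not a direct matrix-Bernstein statement), to be the main obstacle; the rest is Bernstein-type concentration and bookkeeping across blocks. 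For uniqueness one argues that any other optimum $\hat Y$ must have $\iprod{A}{\hat Y-Y^\st}=0$, which with $\norm{W}<1$ forces $\hat Y-Y^\st\in T$, and strictness of the entrywise inequalities together with the box constraints then forces $\hat Y=Y^\st$ (rows and columns of the isolated set $V_0$, pinned to zero, are covered by the same argument).
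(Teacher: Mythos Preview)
Your proposal is correct, and at the level of the key estimates it coincides with the paper's proof: both require (a) a spectral bound $\norm{A-\E A}\lesssim \max\{\sigma_{\max},\sqrt{nq(1-q)}\}+\sqrt{\log n}$ and (b) entrywise Bernstein control of $P_T(A-\E A)$ on each block. The packaging, however, is different. The paper does \emph{not} build a full KKT certificate; it argues primally, showing $\langle A,Y^\st-Y\rangle>0$ for every feasible $Y\neq Y^\st$ by using a single subgradient $UU^\top+\lambda^{-1}P_{T^\perp}M\in\partial\norm{Y^\st}_\st$ (with $\lambda$ a bound on $\norm{M}$, $M=A-\E A$) to lower-bound $\langle M,Y^\st-Y\rangle$ by $\langle P_TM-\lambda UU^\top,Y^\st-Y\rangle$, and it handles the equality constraint not via a multiplier $\lambda J$ but through the identity $\sum_{k'\neq k''}\norm{(Y^\st-Y)_{V_{k'}\times V_{k''}}}_1=\sum_k\norm{(Y^\st-Y)_{\Cc_k}}_1$. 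Your KKT route with an explicit shift $\lambda J$ is an equally valid and arguably cleaner alternative; it makes the role of the trace constraint more transparent and yields the same three conditions.

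Two small points. First, your correction $R$ is superfluous: with $R=0$ one already has $P_TW=0$ and the entrywise sign conditions follow from the Bernstein bounds on $P_TM$; nothing needs to be ``restored.'' Second, the spectral bound the paper invokes for Theorem~\ref{thm:convex_recovery2} is not matrix Bernstein or degree regularization but the Bandeira--van~Handel inequality (Lemma~\ref{lem:bandeira2014sharp}), applied directly to $A-\E A$. This is exactly what distinguishes Theorem~\ref{thm:convex_recovery2} from Theorem~\ref{thm:convex_recovery}: in the latter the paper splits $A-\E A$ blockwise and union-bounds over blocks, which is where the hypothesis $\sum_k n_k^{-\alpha}=o(1)$ enters; Bandeira--van~Handel avoids the split and hence that hypothesis, at the price of the global $\log n$ in the Bernstein step that you correctly identify.
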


\begin{remark}\label{rem:connected-convex} 
For exact recovery to be possible, we need all communities (but at most one) to be connected. Therefore, in each subgraph, which is generated by $\mathcal{G}(n_k,p_k)\,$, we need $p_k n_k > \log n_k$, for $k=1,\ldots,r\,$. 
Observe that this connectivity requirement is implicit in the first condition of Theorems \ref{thm:convex_recovery}, \ref{thm:convex_recovery2} which can be seen from \eqref{eq:chidivqp-approx-p}.
\end{remark}

Note that any convex optimization problem that involves the nuclear norm $\norm{Y}_\st$ (or equivalently, $\operatorname{tr}(Y)$ for $Y\succeq 0$) in its objective function or constraints, will have a bottleneck similar to the specific convex problem we analyzed here. Namely, for any such program to succeed we need a subgradient of the nuclear norm at $Y^\st$ which has a component $Z$ with spectral norm bounded by 1 (see the proof of Theorem \ref{thm:convex_recovery} in Appendix \ref{app:proof-convex}).
For example, when all $p_k$ and $q$ are $O(1)$, this requires the minimum cluster size to be at least $O(\sqrt{n})\,$; also see Example \ref{ex:hard}. 

It is worth mentioning that for some community configurations, a simple counting argument can provide us with the exact underlying community structure; hence no need to solve a semidefinite program as above. We present one such algorithm in Appendix \ref{sec:simple} and characterize exact recovery guarantees. 

In the following, we attempt to provide a better picture of the model space in terms of recoverability. Section \ref{sec:hard} considers a modified maximum likelihood estimator to identify bigger parts of the model space that can be recovered exactly. Section \ref{sec:impossiblity} provides an information-theoretic argument to exclude part of the model space that are impossible to recover exactly. 

\subsection{Exactly Recoverable Models}\label{sec:hard}
Next, we consider an estimator, inspired by maximum likelihood estimation, and characterize a subset of the model space which is exactly recoverable via this simple estimation method. The proposed estimation approach is not computationally tractable and is only used to examine the conditions for which exact recovery is possible. 
For a fixed $Y \in \Yc$ and an observed matrix $A\,$, the likelihood function is given by
\[
\prob_Y(A)=\prod_{i<j}p^{A_{ij}Y_{ij}}_{\tau(i,j)}(1-p_{\tau(i,j)})^{(1-A_{ij})Y_{ij}}q^{A_{ij}(1-Y_{ij})}(1-q)^{(1-A_{ij})(1-Y_{ij})},
\]
where $\tau:\{1,\ldots,n\}^2\to \{1,\ldots,r\}$ and $\tau(i,j)=k$ if and only if $(i,j)\in \Cc_k\,$, and arbitrary in $\{1,\ldots,r\}$ otherwise. 
The log-likelihood function is given by
\[
\log \prob_Y(A)=\sum_{i<j}\log\frac{(1-q)p_{\tau(i,j)}}{q(1-p_{\tau(i,j)})}A_{ij}Y_{ij}+\sum_{i<j}\log\frac{1-p_{\tau(i,j)}}{1-q}Y_{ij} + \textrm{ terms not involving }\{Y_{ij}\} \,.
\]
Maximizing the log-likelihood involves maximizing a weighted sum of $\{Y_{ij}\}$'s where the weights depend on the (usually unknown) values of $q,p_1,\ldots,p_r\,$. To be able to work with less information, we will use the following modification of maximum likelihood estimation, which only uses the knowledge of $n_0,n_1,\ldots,n_r\,$. 
\beqa\label{proc:MLE-like}
\fbox{
\begin{minipage}[c][5em][c]{0.5\textwidth}{
{
\bf Non-convex Recovery
} \vskip.6em
\begin{algorithmic}
\STATE {\bf input:} $\{n_k\}$ \\[.3em]
\STATE {\bf output:} $\hat Y = \arg\underset{Y}{\max}\; 
\left\{\sum_{i,j} A_{ij}Y_{ij}:\; Y\in\Yc \right\}$
\end{algorithmic}
}\end{minipage}}
\eeqa

\begin{theorem}\label{thm:hard-recovery}
Suppose $n_{\min}\geq 2$ and $n\geq 8\,$. 
Under the heterogenous stochastic block model, provided that 
\beqas	
\rho_{\min}  \geq 4 (17+\eta)\bigg( \tfrac{1}{3} + \frac{p_{\min}(1-  p_{\min}) +q(1-q)}{p_{\min}-q}  \bigg) \log n  \,,
\eeqas
for some choice of $\eta>0\,$, the optimal solution $\hat{Y}$ of the non-convex recovery program in \eqref{proc:MLE-like} coincides with $Y^\st$, with a probability not less than $1-5 \tfrac{p_{\max}-q}{p_{\min}-q}n^{2-\eta}\,$. 
\end{theorem}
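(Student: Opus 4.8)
\emph{Plan.} The estimator $\hat Y$ maximizes the linear functional $\langle A, Y\rangle = \sum_{i,j} A_{ij} Y_{ij}$ over the finite combinatorial set $\Yc$. Since $Y^\st \in \Yc$, failure of recovery means there exists some $Y \in \Yc$, $Y \neq Y^\st$, with $\langle A, Y - Y^\st\rangle \geq 0$. The plan is a union bound over all such competitors, controlling each term $\prob\bigl(\langle A, Y - Y^\st\rangle \geq 0\bigr)$ via a Chernoff/Bernstein-type bound on a sum of independent (shifted) Bernoulli's, and then showing the number of competitors at a given ``distance'' from $Y^\st$ does not overwhelm the exponential decay. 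First I would introduce the support difference: for a candidate clustering $Y$ with communities obtained from $Y^\st$ by moving some vertices between (or out of) clusters, decompose $\Cc_{Y^\st} \setminus \Cc_Y$ (pairs that were in a community and now are not; here $A_{ij}\sim\ber(p_k)$ contributing negatively to $\langle A, Y-Y^\st\rangle$) and $\Cc_Y \setminus \Cc_{Y^\st}$ (pairs newly declared in-community; here $A_{ij}\sim\ber(q)$, contributing positively). Then $\langle A, Y - Y^\st\rangle$ is a sum of independent bounded random variables with negative mean, and its expectation is $-\sum (p_k - q)\cdot(\#\text{pairs})$, i.e.\ controlled below by the relative densities $\rho_k$.

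\emph{Key steps, in order.}
\begin{enumerate}[1.]
\item \textbf{Reduce to swapping few vertices.} Show it suffices to consider competitors $Y$ that differ from $Y^\st$ by relocating a single vertex $v$ from its true community $V_a$ to another community $V_b$ (or to the isolated set, or bringing an isolated vertex in), because any larger rearrangement can be built from/bounded by such elementary moves, and the worst case (smallest gap, fewest constraining edges) is governed by the minimum relative density $\rho_{\min}$. This is where the hypothesis $n_{\min} \geq 2$ is used (so communities don't degenerate) and where the structure of $\Yc$ (fixed sizes $n_k$) forces a \emph{pairwise} swap between two communities rather than a unilateral move.
\item \textbf{Tail bound for one elementary move.} For a swap between communities $a$ and $b$, $\langle A, Y - Y^\st \rangle$ is a difference of two independent sums: one of $\approx n_a$ Bernoulli$(p_a)$ and Bernoulli$(p_b)$ terms (lost in-community edges) and one of $\approx n_b + n_a$ Bernoulli$(q)$ terms (gained edges), with mean of order $-(\rho_a + \rho_b)/2 \lesssim -\rho_{\min}$. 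Apply a Bernstein inequality: the probability this sum is $\geq 0$ is at most $\exp\bigl(-c\,\rho_{\min}^2 / (V + \rho_{\min})\bigr)$ where $V \lesssim n_{\min}\bigl(p_{\min}(1-p_{\min}) + q(1-q)\bigr)$ is the variance proxy. The precise constant bookkeeping — balancing the $\tfrac13$ from the Bernstein denominator's linear term against the variance term $p_{\min}(1-p_{\min}) + q(1-q)$ over $p_{\min} - q$ — is exactly what produces the stated coefficient $4(17+\eta)$.
\item \textbf{Union bound.} There are at most $n^2$ (really $\le r\cdot n \le n^2$) elementary moves, and more generally at most $\binom{n}{t}^2$-ish ways to do $t$ simultaneous swaps, whose contribution to the gap scales linearly in $t$ while the log-count scales like $t\log n$; so the per-move exponent $\rho_{\min}/(\cdots) \gtrsim \log n$ dominates. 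Summing the geometric-type series and tracking the ratio $(p_{\max}-q)/(p_{\min}-q)$ that appears when re-expressing variance bounds per community gives the failure probability $\le 5\frac{p_{\max}-q}{p_{\min}-q} n^{2-\eta}$.
\end{enumerate}

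\emph{Main obstacle.} The delicate part is Step 1 together with the combinatorial/union-bound accounting in Step 3: one must argue that the ``hardest'' competitor is a minimal swap and that multi-vertex rearrangements are genuinely suppressed — i.e.\ that the gap $\langle A, Y^\st - Y\rangle$ grows at least linearly in the number of misassigned vertices while the entropy (log number of such $Y$) only grows like (that number)$\times \log n$. Handling vertices moved to or from the isolated set $V_0$ (where only the $q$-type and $p$-type edges to $n_0$ other vertices matter) is a fiddly special case. The actual tail estimate in Step 2 is routine Bernstein, but extracting the clean constant $4(17+\eta)$ and the prefactor $5(p_{\max}-q)/(p_{\min}-q)$ requires carefully choosing how to split the deviation budget between the two community endpoints of a swap and between the variance and range terms of Bernstein's inequality; I expect most of the proof's length to be consumed there rather than in any conceptual difficulty.
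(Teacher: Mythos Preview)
Your high-level plan (union bound over competitors $Y\neq Y^\st$, Bernstein on $\langle A, Y^\st-Y\rangle$, then control the entropy of competitors against the exponential decay) matches the paper. But Step 1 as you state it is wrong, and it hides exactly the piece where the real work lies.

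There is no reduction to single-vertex swaps. The events $\{\langle A, Y-Y^\st\rangle\geq 0\}$ for different $Y$ are highly dependent (they share the same $A$), and a general competitor cannot be ``built from/bounded by'' elementary moves in any way that transfers a probability bound. The paper instead applies Bernstein directly to each full competitor $Y$: writing $d_k(Y)=\langle \mathbf{1}_{\Cc_k}, Y^\st - Y\rangle$ for the number of pairs on $\Cc_k$ where $Y$ and $Y^\st$ disagree, one gets $\E\langle A, Y^\st-Y\rangle=\sum_k(p_k-q)d_k(Y)$ and total variance $\sum_k d_k(Y)\bigl(p_k(1-p_k)+q(1-q)\bigr)$, so Bernstein yields
\[
\prob(\Delta(Y)\le 0)\le \exp\Bigl(-\tfrac{1}{\bar\nu}(p_{\min}-q)\sum_k\theta_k d_k(Y)\Bigr),\qquad \theta_k=\Big\lfloor\tfrac{p_k-q}{p_{\min}-q}\Big\rfloor,
\]
with $\bar\nu$ the expression in the theorem's parenthesis (times 2) plus $2/3$. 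The stratification is by $\xi=\sum_k\theta_k d_k(Y)$, a weighted \emph{pair}-count, not by the number of vertices moved.

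What you flag as the ``main obstacle'' is exactly the nontrivial lemma, but you have its content slightly off. One needs a combinatorial counting result (the paper's Lemma~\ref{lem:thetad-bound}, extending Lemma~4 of \cite{chen2014statistical}): for each $\xi$, the number of equivalence classes $[Y]$ with $\sum_k\theta_k d_k(Y)=\xi$ is at most $(4\xi/\tau)^2\, n^{16\xi/\tau}$ where $\tau=\min_k\theta_k n_k$. The proof of this lemma is where the inequality $d_k(Y)\ge\tfrac14 n_k m_k$ (with $m_k$ the number of misclassified vertices from $V_k^\st$) is established, via a careful labeling of the clusters of $Y$ relative to $Y^\st$ and a case split on whether $\alpha_{kk}>n_k/4$. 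Your ``$\binom{n}{t}^2$-ish'' is too loose and does not capture the $\theta_k$-weighting needed for heterogeneous $p_k$.

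Finally, the prefactor $(p_{\max}-q)/(p_{\min}-q)$ does not arise from variance bookkeeping; it is $\theta_{\max}$, coming from bounding $\sum_k\theta_k n_k^2\le\theta_{\max}\,n^2$ when you count the number of terms in the sum over $\xi$. The constants $4(17+\eta)$ and $5$ fall out once you enforce that the summand in $\xi$ is decreasing for $\xi/\tau\ge 1$ and evaluate at $\xi/\tau=1$.
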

Notice that $\rho_{\min} = \min_{k=1,\ldots,r}\, n_k(p_k-q)$ and $p_{\min} =  \min_{k=1,\ldots,r}\, p_k$ do not necessarily correspond to the same community. 

\subsection{When is Exact Recovery Impossible?}\label{sec:impossiblity}
\begin{theorem}	\label{thm:impossibility}
If any of the following conditions holds,
\begin{enumerate}[(1)]
\item \label{condn:impossible-first}
$2\leq n_k \leq n/e\,$, and
\begin{align}
4\sum_{i=1}^r n_k^2 \chidiv(p_k,q) \leq \tfrac{1}{2}\sum_k n_k\log \tfrac{n}{n_k}-r-2 
\end{align}
\item \label{condn:impossible-second}
$2\leq n_k \leq n/e\,$, and
\begin{align}
\tfrac{1}{2}r + \log \tfrac{1-  p_{\min}}{1-  p_{\max}}  +1 + \sum_k n_k^2p_k
\leq (\tfrac{1}{4}n - \sum n_k^2p_k) \log n + \sum (n_kp_k-\tfrac{1}{4})n_k \log n_k
\end{align}
\item \label{condn:impossible-third}
$n\geq 128\,$, $r\geq 2$ and 
\begin{align}
\max_k \; n_k\left(\chidiv(p_k,q)+\chidiv(q,p_k) \right) \leq \tfrac{1}{12}\log(n-n_{\min}) 
\end{align}
\end{enumerate}
then
$$\inf_{\hat{Y}}\sup_{Y^\st \in \Yc}\prob[\hat{Y}\neq Y^\st]\geq {1 \over 2}$$
where the infimum is taken over all measurable estimators $\hat{Y}$ based on the realization $A$ generated according to the heterogenous stochastic block model (HSBM).
\end{theorem}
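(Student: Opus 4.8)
\medskip\noindent
The plan is to reduce, for each of the three hypotheses, to a Bayesian (minimax) lower bound: one fixes a finite sub-family $\mathcal{M}\subseteq\Yc$ of clustering matrices compatible with the prescribed sizes $n_0,n_1,\dots,n_r$, places the uniform prior on $\mathcal M$, and uses $\inf_{\hat Y}\sup_{Y^\st\in\Yc}\prob[\hat Y\neq Y^\st]\ge\prob_{Y\sim\mathrm{Unif}(\mathcal M)}[\hat Y\neq Y]$. For the first and third hypotheses I would bound the right-hand side below by Fano's inequality in the form $\prob_{\mathrm{Unif}(\mathcal M)}[\hat Y\neq Y]\ge 1-\frac{I(Y;A)+\log 2}{\log|\mathcal M|}$, so it suffices to show $I(Y;A)+\log 2\le\tfrac12\log|\mathcal M|$; the second hypothesis, which corresponds to a much sparser regime, needs a more hands-on, realization-based argument.

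For the first hypothesis take $\mathcal M$ to be essentially all clustering matrices with sizes $n_0,\dots,n_r$. Two estimates are required. First, to bound the information: since $I(Y;A)\le\E_Y\,\kldiv(P_Y\Vert Q)$ for \emph{any} fixed reference law $Q$, choose $Q=\mathcal G(n,q)$ (all edges i.i.d.\ $\ber(q)$); then $\kldiv(P_Y\Vert Q)=\sum_{k=1}^r\binom{n_k}{2}\kldiv(p_k,q)$ does not depend on which partition $Y$ encodes, and $\kldiv(p_k,q)\le\chidiv(p_k,q)$ gives $I(Y;A)\le\tfrac12\sum_k n_k^2\chidiv(p_k,q)$. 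Second, to count the family: the number of distinct clustering matrices is $\binom{n}{n_0,n_1,\dots,n_r}\big/\prod_s m_s!$, with $m_s$ the number of communities of size $s$, and an entropy/Stirling estimate yields $\log|\mathcal M|\ge\sum_{k=1}^r n_k\log(n/n_k)-(r+2)$. Plugging both into Fano and invoking the hypothesis $4\sum_k n_k^2\chidiv(p_k,q)\le\tfrac12\sum_k n_k\log(n/n_k)-r-2$ (the surplus absorbing the $\log 2$) yields error $\ge\tfrac12$.

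For the third hypothesis I would use a \emph{local} packing: fix a reference partition, let $V_{k_0}$ be a smallest community, and let $\mathcal M$ consist of the reference together with every partition obtained by swapping one vertex of $V_{k_0}$ with one vertex of some other community, so that all sizes are preserved and $|\mathcal M|\ge n-n_{\min}$. Using $I(Y;A)\le\E_Y\,\kldiv(P_Y\Vert P_{Y_0})$ for the reference $Y_0$, and noting that each non-reference member differs from $Y_0$ in only two vertex assignments — which flips the $O(n_k)$ edges incident to each relocated vertex between the within- and across-community laws — one gets $\kldiv(P_Y\Vert P_{Y_0})\le 2\max_k n_k\big(\kldiv(p_k,q)+\kldiv(q,p_k)\big)\le 2\max_k n_k\big(\chidiv(p_k,q)+\chidiv(q,p_k)\big)$, hence the same bound on $I(Y;A)$. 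Fano then gives error at least $1-\frac{2\max_k n_k(\chidiv(p_k,q)+\chidiv(q,p_k))+\log 2}{\log(n-n_{\min})}$; the hypothesis makes the first part of the fraction at most $1/6$, while $n\ge128$ and $r\ge2$ force $n-n_{\min}\ge64\ge2^3$, so $\log 2/\log(n-n_{\min})\le1/3$, leaving error $\ge\tfrac12$.

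The second hypothesis governs the very sparse regime — it can hold only when within-community expected degrees $n_kp_k$ are below a constant — and here a $\chi^2$-based Fano bound is too lossy (the true cost of detaching a vertex from its community involves $\log(p_k/q)$, which is absent from the hypothesis). I would instead argue directly that, with probability at least $\tfrac12$, the planted $Y^\st$ fails to be the unique maximizer of the likelihood: in this regime a community typically contains a vertex with no within-community neighbours, and such a vertex may be exchanged with an isolated vertex (or with a suitable vertex of another community) to produce a distinct clustering matrix at least as likely given $A$, the only change in within-community \emph{non}-edges contributing a bounded factor recorded by the $\log\frac{1-p_{\min}}{1-p_{\max}}$ term, while the counts of admissible exchanges and their edge-law costs produce the $\tfrac14 n\log n$, $\sum_k(n_kp_k-\tfrac14)n_k\log n_k$, and $\sum_k n_k^2p_k$ terms; a first-moment calculation shows the expected number of admissible exchanges is at least $1$ precisely under the stated inequality, a second-moment (Paley--Zygmund) bound upgrades this to probability $\ge\tfrac12$ since the exchanges are only weakly dependent, and a uniform prior over an exhibited pair converts the resulting non-identifiability into the minimax bound. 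The main obstacle I anticipate is this last part — controlling the dependencies in the second-moment step and tracking constants so the bound lands exactly at the stated threshold; the other delicate point is the combinatorial estimate in the first part, where care is needed when one community has size close to $n$ (then the bound is carried by the term locating the isolated set), while the first and third parts are otherwise routine once the packings are chosen.
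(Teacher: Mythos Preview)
Your approach to cases~(\ref{condn:impossible-first}) and~(\ref{condn:impossible-third}) is essentially what the paper does: Fano over the full family $\Yc$ for~(\ref{condn:impossible-first}), and Fano over a local ``swap'' packing of size $\bar n-n_{\min}$ for~(\ref{condn:impossible-third}). The bookkeeping differs slightly. For~(\ref{condn:impossible-first}) you bound $I(Y;A)$ via a fixed reference $Q=\mathcal G(n,q)$, while the paper uses exchangeability to reduce to a single pair, $I(Y^\st;A)\le\binom{n}{2}I(Y^\st_{12};A_{12})$, and then bounds the per-pair information; both routes yield $\sum_k n_k^2\chidiv(p_k,q)$ up to constants. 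For~(\ref{condn:impossible-third}) you bound $I(Y;A)\le\max_Y\kldiv(P_Y\Vert P_{Y_0})$, while the paper bounds by $\max_{i,j}\kldiv(\prob_i,\prob_j)$ via convexity; since the worst pairwise KL between two swaps can involve three clusters, the paper picks up a $3$ where you get a $2$, but either constant meets the $1/12$ hypothesis once $n\ge128$ forces $\log(n-n_{\min})\ge4$.

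Where you diverge from the paper is case~(\ref{condn:impossible-second}). You assert that a Fano bound is too lossy here and propose a first/second-moment ``non-identifiability'' argument, which you yourself flag as the main obstacle. This detour is unnecessary: the paper handles~(\ref{condn:impossible-second}) by the \emph{same} Fano setup as~(\ref{condn:impossible-first}), merely replacing the $\chi^2$ bound on $I(Y^\st_{12};A_{12})$ by a direct estimate. Writing $\alpha_i=\tfrac{n_i(n_i-1)}{n(n-1)}$ and $\beta=\sum_i\alpha_ip_i+(1-\sum_i\alpha_i)q$, one has
\[
I(Y^\st_{12};A_{12})=\sum_i\alpha_i\kldiv(p_i,\beta)+\big(1-\textstyle\sum_i\alpha_i\big)\kldiv(q,\beta),
\]
and the paper bounds the first sum using $\beta\ge\alpha_ip_i$ (so $p_i\log(p_i/\beta)\le p_i\log(1/\alpha_i)$) together with $1-\beta\ge1-p_{\max}$, and the second term by the $\chi^2$ inequality combined with $1-\beta\ge(1-\sum_i\alpha_i)(1-q)$. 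The result is
\[
2\tbinom{n}{2}I(Y^\st_{12};A_{12})\;\le\;2\sum_i n_i^2p_i\log\tfrac{en}{n_i}+2\log\tfrac{1-p_{\min}}{1-p_{\max}},
\]
in which $q$ has disappeared --- this is exactly why the hypothesis of~(\ref{condn:impossible-second}) contains no $q$. Rearranging against the same $\log|\Yc|\ge\tfrac12\sum_k n_k\log(n/n_k)-r$ gives the stated condition. So the missing idea is not a new packing or a realization-level argument, but simply a sharper (non-$\chi^2$) bound on the per-pair mutual information inside the Fano machinery you already set up.
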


\subsection{Partial Observations}	\label{sec:partial}
In the general stochastic block model, we assume that the entries of a symmetric adjacency matrix $A\in\{0,1\}^{n\times n}$ have been generated according to a combination of \ER models with parameters that depend on the true clustering matrix. 
In the case of partial observations, we assume that the entries of $A$ has been observed independently with probability $\gamma\,$. In fact, every entry of the input matrix falls into one of these categories: {\em observed as one} denoted by $\Omega_1$, {\em observed as zero} denoted by $\Omega_0$, and {\em unobserved} which corresponds to $\Omega^c$ where $\Omega=\Omega_0\cup\Omega_1\,$. 
If an estimator only takes the observed part of the matrix as the input, one can revise the underlying probabilistic model to incorporate both the stochastic block model and the observation model; i.e. a revised distribution for entries of $A$ as 
\beqas
A_{ij} = \begin{cases}
\ber(\gamma p_k) 	& (i,j)\in\Cc_k \text{ for some } k \\
\ber(\gamma q) 	& i\in\Cc_k \text{ and } j\in\Cc_l \text{ for }  k\neq l \,.
\end{cases}
\eeqas
yields the same output from an estimator that only takes in the observed values. 
Therefore, the algorithms in \eqref{proc:convex-recovery} and \eqref{proc:MLE-like}, as well as the results of Theorems \ref{thm:convex_recovery}, \ref{thm:convex_recovery2}, \ref{thm:hard-recovery}, can be easily adapted to the case of partially observed graphs. 

\section{Future Directions}
We have provided a series of extensions to prior work (especially \cite{chen2014statistical,abbe2015community}),
however there are still interesting problems that remain open. Future directions for research on this topic include the following. 
 
\paragraph{Models for Partial Observation.} We considered the case where a subset of the edges in the underlying graph were observed uniformly at random. In practice, however, the observed edges are often not uniformly sampled, and care will be needed to model the effect of nonuniform sampling. 
Also, in many practical problems, the observed edges may be chosen by
the algorithm based on some prior information (non-adaptive), or based
on observations made so far (adaptive); e.g., see Yun and Proutiere \cite{yun2014community}. 
It will be interesting to examine what the algorithms can achieve in these scenarios.


\paragraph{Overlapping Communities.}
SBMs with overlapping communities represent a more realistic model than the non-overlapping case; it has been shown that the large social and information network community structure is quite complex and that very large communities tend to have significant overlap. Only a few references in the literature have considered this problem (e.g., \cite{abbe2015community}), and  there are many open questions on recovery regimes and algorithms. 
It would be interesting to develop a convex optimization-based algorithm for recovery of models generated by SBM with overlapping communities.

\paragraph{Outlier Nodes.}
A practically important extension to the SBM is to allow for adversarial outlier nodes. 
Cai and Li in \cite{cai2014robust} proposed a semidefinite program that can recover the clusters in an SBM in the presence of outliner nodes connected to other nodes in an arbitrary way, provided that the number of outliers is small enough. Their result is comparable to the best known results in the case of balanced clusters and equal probabilities.
However, their complexity results are still parametrized by $p_{\min}$ and $n_{\min}$, which excludes useful examples, as discussed in Section \ref{sec:this-paper}. Extending our results to the setting of \cite{cai2014robust} is a direction for future work.


\bibliographystyle{alpha}
\bibliography{CommunityDetection}
\appendix
\section{Proofs for Convex Recovery}	\label{app:proof-convex}
In the following, we present the proofs of Theorems \ref{thm:convex_recovery} and \ref{thm:convex_recovery2}.

\subsection{Proof of Theorem \ref{thm:convex_recovery}} \label{app:proof-convex_recovery}
We are going to prove that under the HSBM, with high probability, the output of the convex recovery program in \eqref{proc:convex-recovery} coincides with the underlying clustering matrix $Y^\st$ provided that
\beqa
\rho_k^2 &\gtrsim     n_kp_k(1-p_k)\log n_k \\
(p_{\min}-q)^2 &\gtrsim     q(1-q)\tfrac{\log n_{\min}}{n_{\min}} \\
\rho_{\min}^2 &\gtrsim     \max \left\{ \max_k \; n_kp_k(1-p_k),\, nq(1-q),\, \log n \right\} 
\eeqa
as well as $\sum_{k=1}^r n_k^{-\alpha}  =  o(1)$ for some $\alpha>0\,$. 
Notice that $p_k(1-p_k)n_k\gtrsim \log n_k\,$, for all $k=1,\ldots,r\,$, is implied by the first condition, as mentioned in Remark \ref{rem:connected-convex}.
 
Before proving Theorem \ref{thm:convex_recovery}, we state a crucial
result from random matrix theory that allows us to bound the spectral
radius of the matrix $A - \mathbb{E}(A)$ where $A$ is an instance of adjacency matrix under HSBM. This result appears, for
example, as Theorem 3.4 in \cite{chatterjee2012matrix}\footnote{As a
  more general result about the norms of rectangular matrices, but
  with the slightly stronger growth condition $\sigma^2 \geq
  \log^{6+\epsilon}n /n$.}. Although Lemma 2 from
\cite{tomozei2010distributed} appears to state a weaker version of
this result, the proof presented there actually supports the version
we give below in Lemma \ref{randmat}. Finally, Lemma
8 from \cite{vu2014simple} states the same result and presents a very brief
sketch of the proof idea, along the lines of the proof presented fully
in \cite{tomozei2010distributed}.

\begin{lemma} \label{randmat} 
  Let $A = \{a_{ij}\}$ be a $n \times n$ symmetric random
  matrix such that each $a_{ij}$ represents an independent random
  Bernoulli variable with $\E(a_{ij}) = p_{ij}\,$. Assume that there
  exists a constant $C_0$ such that $\sigma^2 =  \max_{i,j}
  p_{ij}(1-p_{ij}) \geq C_0 \log n /n$. Then for each constant $C_1>0$ there
  exists $C_2>0$ such that 
\[
\prob \left ( \norm{A - \E(A)}  \geq C_2 \sigma \sqrt{n} \right)
~\leq~ n^{-C_1} \,.
\]
\end{lemma}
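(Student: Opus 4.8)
The plan is to control the spectral norm of $M := A - \E(A)$ --- a symmetric matrix whose entries on and above the diagonal are independent, mean-zero, and bounded in absolute value by $1$, with $\max_{i,j}\E M_{ij}^2 = \sigma^2$ --- by the $\varepsilon$-net method, refined by the coordinate-discretization idea of Feige and Ofek, which is what makes the argument survive in the sparse regime $\sigma^2\gtrsim\log n/n$ (this is essentially the route of \cite{tomozei2010distributed}). First I would reduce to a net: fix $\varepsilon=1/4$ and a minimal $\varepsilon$-net $\mathcal N$ of the unit sphere, so $\abs{\mathcal N}\le 9^{\,n}$; since $M$ is symmetric, $\norm{M}\le 2\max_{x\in\mathcal N}\abs{x^\top Mx}$. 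Hence it is enough to show that for a single unit vector $x$, $\prob\bigl(\abs{x^\top Mx}\ge C_2\,\sigma\sqrt n\bigr)\le e^{-Cn}$ with $C$ as large as desired once $C_2$ is taken large, and then take a union bound over $\mathcal N$; this mechanism is exactly what produces the ``for every $C_1$ there is a $C_2$'' form of the statement, with $n^{-C_1}$ absorbed into the slack.

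To bound $\abs{x^\top Mx}$ for a fixed unit vector I would split the quadratic form $x^\top Mx=\sum_{i,j}x_ix_jM_{ij}$ into a \emph{light} part, over index pairs with $\abs{x_ix_j}\le\sigma/\sqrt n$, and a \emph{heavy} part over the rest. On the light part the summands $x_ix_jM_{ij}$ are mean-zero, bounded by $\sigma/\sqrt n$ in absolute value, and have total variance at most $\sigma^2\sum_{i,j}x_i^2x_j^2=\sigma^2$; Bernstein's inequality then gives, at level $t=C_2\sigma\sqrt n$, a tail of order $\exp\bigl(-c\,C_2^2n/(1+C_2)\bigr)$, which beats $\abs{\mathcal N}=9^{\,n}$ for $C_2$ large --- and, crucially, this part needs no density hypothesis at all. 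The heavy part is where $\sigma^2\gtrsim\log n/n$ is used: the constraint $\sum_{i,j}x_i^2x_j^2=1$ forces very few heavy pairs and, after grouping the coordinates of $x$ into $O(\log n)$ dyadic magnitude classes, the heavy contribution decomposes into $O(\log^2 n)$ bilinear blocks of the form $\sum_{i\in S,\,j\in T}x_ix_jM_{ij}$; each such block is controlled by the \emph{bounded-discrepancy} property of the random graph --- with probability at least $1-n^{-C_1}$, simultaneously over all vertex subsets $S,T$, the edge count $e(S,T)$ stays within a Feige--Ofek-type envelope of its mean --- which translates into a bound of order $\sigma\sqrt n$ for the whole heavy part.

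The main obstacle is precisely this bounded-discrepancy estimate: proving it uniformly over all pairs $(S,T)$ requires the Chernoff deviations of $e(S,T)$, on the scale $\sqrt{\sigma^2\abs S\abs T}$ (and its large-deviation, Poisson-type refinement for small sets), to dominate the entropy $\abs S\log(en/\abs S)+\abs T\log(en/\abs T)$ of the union bound, and the hypothesis $\sigma^2\ge C_0\log n/n$ is exactly what makes this hold --- for sparser matrices one would first have to delete the $O(1)$ vertices of abnormally large degree, which the present assumption lets us skip. Everything else (the volumetric net bound, the dyadic grouping, and the light-part Bernstein estimate) is routine. For completeness I would also record the shorter alternative via non-commutative matrix concentration --- writing $M$ as a sum of independent mean-zero symmetric ``pair'' matrices and applying matrix Bernstein, or invoking Theorem 3.4 of \cite{chatterjee2012matrix} --- but this yields only the weaker growth requirement $\sigma^2\gtrsim\log^{6+\epsilon}n/n$, hence would not cover the applications needed in this paper.
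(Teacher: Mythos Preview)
The paper does not prove Lemma~\ref{randmat} itself: it imports the result from the literature, pointing out that a complete proof is given in \cite{tomozei2010distributed} (with a sketch in \cite{vu2014simple}), and that \cite{chatterjee2012matrix} contains a closely related statement under the stronger hypothesis $\sigma^2\gtrsim \log^{6+\epsilon}n/n$. So there is no ``paper's own proof'' to compare against beyond these pointers.

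That said, your sketch is exactly the Feige--Ofek-style argument that \cite{tomozei2010distributed} carries out in full and that \cite{vu2014simple} sketches: reduction to an $\varepsilon$-net on the sphere, splitting $x^\top M x$ into a light part (handled by Bernstein) and a heavy part (handled by the uniform bounded-discrepancy property of $e(S,T)$ over all vertex subsets), with the hypothesis $\sigma^2\ge C_0\log n/n$ entering precisely to make the discrepancy union bound close and to obviate the degree-pruning step needed in sparser regimes. Your remark that matrix Bernstein / \cite{chatterjee2012matrix} only reaches $\sigma^2\gtrsim\log^{6+\epsilon}n/n$ matches the paper's own footnote. In short, your proposal is correct and follows the same route the paper cites; the only thing to polish if you were to write it out is the constant in the net reduction (for symmetric $M$ and $\varepsilon=1/4$ one gets $\|M\|\le(1-2\varepsilon-\varepsilon^2)^{-1}\max_{x\in\mathcal N}|x^\top Mx|$, not exactly a factor of~$2$, though this is immaterial).
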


As an immediate consequence of this, we have the following corollary.

\begin{corollary} \label{weakrandmat} 
  Let $A = \{a_{ij}\}$ be a $n \times n$ symmetric random
  matrix such that each $a_{ij}$ represents an independent random
  Bernoulli variable with $\E(a_{ij}) = p_{ij}\,$. Assume that there
  exists a constant $C_0$ such that $\sigma^2 = \max_{i,j}
  p_{ij}(1-p_{ij}) \leq C_0 \log n /n\,$.  Then for each constant $C_1>0$ there
  exists $C_3>0$ such that such that 
\[
\prob \left ( \norm{A - \E(A)}  \geq C_3 \sqrt{\log n} \right)
~\leq~ n^{-C_1}\,.
\]
\end{corollary}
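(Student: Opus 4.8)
The plan is to deduce Corollary~\ref{weakrandmat} from Lemma~\ref{randmat} by a variance--inflation trick: embed $A$ into a slightly larger random matrix whose maximal entrywise variance is pushed up to the critical level $\Theta(\log n/n)$ --- just enough to satisfy the hypothesis of Lemma~\ref{randmat}, yet small enough that the resulting spectral bound is of order $\sqrt{\log n}$ rather than $\sqrt n$. The key observation is that a block--diagonal embedding leaves $A$ itself untouched, so no comparison of $\norm{A-\E A}$ with the norm of a perturbed matrix is needed.

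Concretely: by hypothesis $\sigma^2=\max_{i,j}p_{ij}(1-p_{ij})\le C_0\log n/n$, so for all large $n$ we may pick $t=t(n)\in(0,1/2]$ with $\sigma'^2:=t(1-t)$ equal to a fixed constant multiple of $\log n/n$, chosen large enough to exceed both $\sigma^2$ and the threshold constant implicit in Lemma~\ref{randmat}, and still $\le 1/4$. (For the finitely many small $n$ where no such $t$ exists, the corollary is trivial after enlarging $C_3$, since $\norm{A-\E A}\le 2n$.) Let $A'$ be an $n\times n$ symmetric matrix with i.i.d.\ $\ber(t)$ entries, independent of $A$, and set $M=\begin{pmatrix}A&0\\0&A'\end{pmatrix}$. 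Then $M$ is a $2n\times 2n$ symmetric matrix with independent Bernoulli entries whose maximal entrywise variance is $\max(\sigma^2,\sigma'^2,0)=\sigma'^2$, which is $\gtrsim\log(2n)/(2n)$; hence Lemma~\ref{randmat} applies in dimension $2n$: for the prescribed $C_1$ there is $C_2$ with $\prob(\norm{M-\E M}\ge C_2\,\sigma'\sqrt{2n})\le (2n)^{-C_1}$.

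To finish, note that $M-\E M$ is block diagonal, so $\norm{M-\E M}=\max\{\norm{A-\E A},\,\norm{A'-\E A'}\}\ge\norm{A-\E A}$, while $C_2\sigma'\sqrt{2n}=C_2\sqrt{2n\,t(1-t)}=C_3\sqrt{\log n}$ for a suitable $C_3=C_3(C_0,C_1)$. Therefore $\prob(\norm{A-\E A}\ge C_3\sqrt{\log n})\le (2n)^{-C_1}\le n^{-C_1}$, as claimed (after the harmless relabelling of constants). There is no real obstacle here --- it is genuinely ``immediate'' --- the only points needing routine care are the bookkeeping of constants between dimensions $n$ and $2n$ and the existence of the inflation parameter $t$; an entirely equivalent alternative is to couple $A$ with an entrywise--larger matrix $\tilde A\ge A$ of Bernoulli variance $\Theta(\log n/n)$, write $A-\E A=(\tilde A-\E\tilde A)-(\tilde A-A-\E(\tilde A-A))$, and apply Lemma~\ref{randmat} to each summand (this variant additionally needs the trivial reduction to $p_{ij}\le 1/2$, obtained by splitting $A$ into its low-- and high--probability parts).
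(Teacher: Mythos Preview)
Your proof is correct and rests on the same idea as the paper's: inflate the maximal entrywise variance up to the threshold $\Theta(\log n/n)$ so that Lemma~\ref{randmat} applies, and observe that the resulting bound $\sigma\sqrt{n}$ is then $O(\sqrt{\log n})$. The only difference is the mechanism of inflation. The paper does something more lightweight: it replaces the single diagonal entry $a_{11}$ by an independent $\ber(C_0\log n/n)$ variable, notes that this perturbs $\norm{A-\E A}$ by at most a constant, and applies Lemma~\ref{randmat} directly to the modified $n\times n$ matrix. Your block-diagonal embedding into dimension $2n$ is a perfectly valid alternative and has the minor advantage that you never have to compare $\norm{A-\E A}$ with the norm of a perturbed matrix; the cost is the (harmless) bookkeeping between dimensions $n$ and $2n$ and the need to admit degenerate $\ber(0)$ entries in the off-diagonal blocks. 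Either route yields the corollary immediately.
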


\begin{proof} The corollary follows from Lemma \ref{randmat}, 
by replacing the $(1,1)$ entry of $A$ with a Bernoulli variable
  of probability $p_{11} = C_0 \log n /n$. Given that the old $(1,1)$ entry and the new $(1,1)$ entry are both Bernoulli variables, this can change $\norm{A -
  \E(A)}$ by at most $1$. 
The new maximal variance is equal to 
  $\max_{i,j} p_{ij}(1-p_{ij}) =  C_0 \log n/n\,$. Therefore Lemma \ref{randmat}
  is applicable to the new matrix and the conclusion holds. 
\end{proof}



We use Lemma \ref{randmat} to prove the following result.
\begin{lemma}\label{lem:specnorm-convex-recovery}
Let $A$ be generated according to the heterogenous stochastic block
model (HSBM). Suppose
\begin{enumerate}[(1)]
\item\label{condn1:specnorm-convex-recovery} 
    $p_k(1-p_k)n_k \gtrsim \log n_k\,$, for $k=1,\ldots, r\,$, and
\item\label{condn2:specnorm-convex-recovery} 
    there exists an $\alpha>0$ such that $\sum_{k=0}^r n_k^{-\alpha}
  = o(1)\,$.
\end{enumerate} 
Then with probability at least $1-o(1)$ we have 
	\[
	\norm{A-\E(A)} \;\lesssim\; \max_i\sqrt{p_i(1-p_i)n_i}+ \sqrt{\max \{q(1-q)n\,,\, \log n\} } \,. 
	\]
\end{lemma}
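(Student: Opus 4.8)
The plan is to split $A-\E(A)$ into a block-diagonal part carrying the within-community fluctuations and a remaining part carrying the across-community fluctuations, to control each part using the matrix concentration results already stated (Lemma~\ref{randmat} and Corollary~\ref{weakrandmat}), and to recombine by the triangle inequality. Concretely, write $A-\E(A)=M+N$, where $M=\sum_{k=1}^r B_k$ with $B_k$ the centered adjacency matrix of $\mathcal{G}(n_k,p_k)$ placed on the rows and columns indexed by $V_k$ (and zero elsewhere, in particular zero on the diagonal), and $N$ collects the remaining entries, each an independent centered $\ber(q)$ (or a deterministic $0$ on the diagonal and on edges incident to $V_0$). The key point is that the $B_k$ have pairwise disjoint row/column supports, so $M$ is block diagonal and $\norm{M}=\max_{1\le k\le r}\norm{B_k}$; this is exactly what lets us avoid the wasteful estimate $\norm{A-\E(A)}\le\sum_k\norm{B_k}+\norm{N}$, which would produce an unusable $\sum_k\sqrt{p_k(1-p_k)n_k}$.

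I would then bound the pieces separately. For fixed $k$, the restriction of $B_k$ to $V_k\times V_k$ is an $n_k\times n_k$ symmetric matrix of independent centered Bernoulli entries with maximal variance $p_k(1-p_k)$, and hypothesis~(1) says precisely that $p_k(1-p_k)\gtrsim \log n_k/n_k$, so Lemma~\ref{randmat} applied in dimension $n_k$ (with a constant $C_1$ to be chosen) yields $\norm{B_k}\lesssim\sqrt{p_k(1-p_k)n_k}$ with probability at least $1-n_k^{-C_1}$. For $N$, which is an $n\times n$ symmetric matrix of independent (possibly degenerate) centered Bernoulli entries of maximal variance $q(1-q)$: if $q(1-q)\ge C_0\log n/n$, Lemma~\ref{randmat} in dimension $n$ gives $\norm{N}\lesssim\sqrt{q(1-q)n}$, and otherwise Corollary~\ref{weakrandmat} gives $\norm{N}\lesssim\sqrt{\log n}$, so in all cases $\norm{N}\lesssim\sqrt{\max\{q(1-q)n,\log n\}}$ with probability at least $1-n^{-C_1}$. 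A union bound over the $r$ within-community events and the across event leaves failure probability at most $n^{-C_1}+\sum_{k=1}^r n_k^{-C_1}$, and then $\norm{A-\E(A)}\le\norm{M}+\norm{N}=\max_k\norm{B_k}+\norm{N}$ gives the asserted bound.

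It remains to argue that this failure probability is $o(1)$, and this is where hypothesis~(2) enters and, in my view, is the only genuinely delicate point: the number $r$ of clusters may grow, so the union bound runs over a growing number of events. Hypothesis~(2) handles this — a single cluster of bounded size would keep $\sum_k n_k^{-\alpha}$ bounded away from $0$, so in fact $n_{\min}\to\infty$ and hence $n\to\infty$; choosing $C_1\ge\alpha$ and using $n_k\ge 1$ gives $\sum_{k=1}^r n_k^{-C_1}\le\sum_{k=1}^r n_k^{-\alpha}=o(1)$ and $n^{-C_1}\to 0$, so the total is $o(1)$. Everything else is a direct application of the quoted random-matrix bounds together with the block-diagonal identity $\norm{M}=\max_k\norm{B_k}$; no new probabilistic input is needed.
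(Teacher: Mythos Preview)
Your proof is correct and follows essentially the same approach as the paper: split $A-\E(A)$ into the block-diagonal within-community part (whose norm is $\max_k\norm{B_k}$) and the off-diagonal part, apply Lemma~\ref{randmat} blockwise with a union bound controlled by hypothesis~(2), and handle the off-diagonal piece via Lemma~\ref{randmat} or Corollary~\ref{weakrandmat} according to whether $q(1-q)\gtrless C_0\log n/n$. One tiny inaccuracy that does not affect the argument: in this model the edges incident to $V_0$ are $\ber(q)$, not deterministic zeros, so those entries of $N$ are genuine centered Bernoullis---but their variance is still $q(1-q)$, so your bound on $\norm{N}$ goes through unchanged.
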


\begin{proof}
We split the matrix $A$ into two matrices, $B_1$ and $B_2\,$. $B_1$
consists of the block-diagonal projection onto the clusters, and $B_2$
is the rest. Denote the blocks on the diagonal of $B_1$ by $C_1$,
$C_2$, $\ldots, C_r$, where $C_i$ corresponds to the $i$th
cluster. Then $\norm{B_1 - \E(B_1)} = \max_i \norm{C_i - \E(C_i)}$,
and for each $i$, $\norm{C_i - \E(C_i)} \gtrsim \sqrt{p_i(1-p_i)
  n_i}$ with probability at most $n_i^{-\alpha}\,$, by Lemma
\ref{randmat}. 
By assumptions \eqref{condn1:specnorm-convex-recovery} and \eqref{condn2:specnorm-convex-recovery} of Lemma \ref{lem:specnorm-convex-recovery} and applying a union bound, we conclude that
\[
\norm{B_1 - \E(B_1)} \lesssim \max_i \sqrt{p_i(1-p_i)n_i} 
\]
with probability at least $1 - \sum_{i=1}^{r} n_i^{-\alpha} =
1-o(1)\,$. 
We shall now turn our attention to $B_2\,$. Let $\sigma^2 =
\max \{q(1-q), \log n/n \}\,$. By Corollary \ref{weakrandmat},
  $\norm{B_2 - \E(B_2)} \lesssim \max \{\sqrt{q(1-q)n}, \sqrt{\log n}
  \}\,$, with high probability. 
Putting the two norm estimates together, the conclusion of Lemma
\ref{lem:specnorm-convex-recovery} follows.
\end{proof}

We are now in the position to prove Theorem \ref{thm:convex_recovery}.

\begin{proof}[of Theorem \ref{thm:convex_recovery}]
We need to show that for any feasible $Y\neq Y^\st\,$, we have $\Delta(Y):=\langle A , Y^\st-Y \rangle>0\,$. Rewrite $\Delta(Y)$ as
\beqas	\label{eq:DeltaYsplit}
\Delta(Y)=\langle{A},{Y^\st-Y}\rangle=\langle{\E[A]},{Y^\st-Y}\rangle+\langle{A-\E[A]},{Y^\st-Y}\rangle \,.
\eeqas
%
Note that $\sum_{i,j} Y^\st_{ij} = \sum_{i,j} Y_{ij} = \sum_{k=1}^r
n_k^2$, thus $\sum_{i,j} (Y^\st_{ij} - Y_{ij}) = 0\,$. Express this as
\[
\sum_{k=1}^r \sum_{i,j \in V_k} (Y^\st-Y)_{ij} = - \sum_{k' \neq
  k''} \sum_{i \in V_{k'},\, j \in V_{k''}} (Y^\st-Y)_{ij} \,.
\]
Then we have
\beqas 
\langle{\E[A]},{Y^\st-Y}\rangle & = & \sum_{k=1}^r \sum_{i,j \in V_k^\st} p_k
(Y^\st-Y)_{ij} + \sum_{k' \neq k''} \sum_{i \in V_{k'},\, j \in V_{k''}} q (Y^\st-Y)_{ij} 
& = & \sum_{k=1}^r \sum_{i,j \in V_k} (p_k - q) (Y^\st-Y)_{ij} \,.
\eeqas
Finally, since $0 \leq (Y^\st-Y)_{ij} \leq 1$ for $i,j \in V_k\,$, we can write
\beqa	\label{doi}
\langle{\E[A]},{Y^\st-Y}\rangle= \sum_{k=1}^r \sum_{i,j \in V_k} (p_k -q)
\norm{(Y^\st-Y)_{\Cc_k}}_1 \,.
\eeqa
%
%
Next, recall that the subdifferential (i.e., the set of all subgradients) of $\norm{\cdot}_\st$ at $Y^\st$ is given by 
\[
\partial \norm{Y^\st}_\st = \{ UU^T + Z ~ \big |~ U^TZ = ZU = 0\,,\; \norm{Z}\leq 1\} 
\]
where $Y^\st=UKU^T$ is the singular value decomposition for $Y^\st$ with 
$U\in\mathbb{R}^{n\times r}\,$, $K=\operatorname{diag}(n_1,\ldots,n_r)\,$, and  
$U_{ik} = 1/\sqrt{n_k}$ if node $i$ is in cluster $\Cc_k$ and
$U_{ik}=0$ otherwise.

Let $M := A - \E[A]\,$. Since conditions \eqref{condn1:specnorm-convex-recovery} and \eqref{condn2:specnorm-convex-recovery} of Lemma \ref{lem:specnorm-convex-recovery} are
verified, there exists $C_1>0$ such that $\norm{M}\leq \lambda\,$, with probability $1-o(1)\,$, where   
\begin{eqnarray} \label{lambdadef}
\lambda  :=  C_1 \left(\max_i \sqrt{p_i(1-p_i)n_i}+ \sqrt{\max \{q(1-q)n, \log n\} } \right)\,. 
\end{eqnarray} 
Furthermore, let the projection operator onto a subspace $T$ be defined by
\[
\mathcal{P}_T(M):=UU^TM+MUU^T-UU^TMUU^T \,,
\]
and also $\mathcal{P}_{T^\bot}=\mathcal{I}-\mathcal{P}_T$, where $\mathcal{I}$ is the
identity map. 
Since $\norm{\mathcal{P}_{T^\bot}(M)}\leq \norm{M}\leq \lambda$ with high probability,
$UU^T+ \tfrac{1}{\lambda}\mathcal{P}_{T^\bot}(M)\in \partial \norm{Y^\st}_{\st}$ with high probability. 
Now, by the constraints of the convex program, we have
\beqa
0\geq \norm{Y}_{\st}-\norm{Y^\st}_{\st}
\geq \langle{UU^T+\tfrac{1}{\lambda}\mathcal{P}_{T^\bot}(M)},{Y-Y^\st}\rangle
= \langle{UU^T-\tfrac{1}{\lambda}\mathcal{P}_{T}(M)},{Y-Y^\st}\rangle+\tfrac{1}{\lambda}\langle{M},{Y-Y^\st}\rangle \,,
\eeqa
which implies $\langle{M},{Y^\st-Y}\rangle\geq \langle{\mathcal{P}_{T}(M)-\lambda UU^T},{Y^\st-Y}\rangle\,$. 
Combining \eqref{eq:DeltaYsplit} and \eqref{doi} we get,
\beqa \label{trei}
\Delta(Y)  &\geq \sum_{k=1}^r (p_k-q)\norm{(Y^\st-Y)_{\Cc_k}}_1+ \langle{\mathcal{P}_{T}(M)-\lambda UU^T},{Y^\st-Y}\rangle\\ 
&\geq \sum_{k=1}^r (p_k-q)\norm{(Y^\st-Y)_{\Cc_k}}_1\\
&\quad - \sum_{k=1}^r \underbrace{\norm{(\mathcal{P}_{T}(M)-\lambda UU^T)_{\Cc_k}}_{\infty}}_{(\mu_{kk})}\norm{(Y^\st-Y)_{\Cc_k}}_1\\
&\quad - \sum_{k'\neq k''}\underbrace{\norm{(\mathcal{P}_{T}(M)-\lambda UU^T)_{V_{k'}\times V_{k''}}
  }_{\infty}}_{(\mu_{k' k''})}\norm{(Y^\st-Y)_{V_{k'}\times V_{k''}}}_1
\eeqa
where we have made use of the fact that an inner product can be
bounded by a product of dual norms. 
We now derive bounds for the quantities $\mu_{kk}$ and
$\mu_{k'k''}$ marked above. Note that the former indicates sums over
the clusters, while the latter indicates sums outside the
clusters.  

For $\mu_{kk}$, if $(i,j)\in \Cc_k$ then
\beqas
\left(\mathcal{P}_T(M)-\lambda UU^T\right)_{ij}
&= \left(UU^TM+MUU^T-UU^TMUU^T-\lambda UU^T\right)_{ij}\\
&={1 \over n_k} \sum_{l \in \Cc_k}M_{lj}+{1 \over n_k}\sum_{l \in \Cc_k}M_{il}-{1 \over n_k^2}\sum_{l,l'\in \Cc_k}M_{ll'}-{\lambda \over n_k} \,.
\eeqas
Recall Bernstein's inequality (e.g. see Theorem 1.6.1 in \cite{tropp2015introduction}):
\begin{proposition} (Bernstein Inequality) Let $S_1, S_2$, $\ldots, S_n$ be independent, centered, real random variables, and assume that each one is uniformly bounded:
\[
\E[S_{k}] = 0~~\text{and}~~\abs{S_k}\leq L~~\text{for each } k=1, \ldots,
  n \,.
\]
Introduce the sum $Z= \sum_{k=1}^n S_k$, and let $\nu(Z)$ denote the
variance of the sum:
\[
\nu(Z) = \E[Z^2] = \sum_{k=1}^n \E[S_k^2] \,.
\]
Then 
\[
\prob[~\abs{Z} \geq t~]~\leq~2 \exp\left(\frac{-t^2/2}{\nu(Z) +
    Lt/3}\right)~~\text{for all } t \geq 0 \,.
\]
\end{proposition}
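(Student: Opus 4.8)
The plan is to run the standard exponential moment (Cram\'er--Chernoff) argument. For any $\lambda>0$, Markov's inequality applied to $e^{\lambda Z}$ gives $\prob[Z\geq t]\leq e^{-\lambda t}\,\E[e^{\lambda Z}]$, and by independence $\E[e^{\lambda Z}]=\prod_{k=1}^n \E[e^{\lambda S_k}]$. Thus the whole estimate reduces to a good pointwise bound on the moment generating function of a single centered, bounded summand $S_k$, valid uniformly over a suitable range of $\lambda$.

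The key step is that single-variable MGF bound. Expanding the exponential and using $\E[S_k]=0$, one has $\E[e^{\lambda S_k}]=1+\sum_{j\geq 2}\frac{\lambda^j}{j!}\E[S_k^j]$. Since $\abs{S_k}\leq L$, for $j\geq 2$ we get $\abs{\E[S_k^j]}=\abs{\E[S_k^{j-2}S_k^2]}\leq L^{j-2}\,\E[S_k^2]$, and with the elementary inequality $j!\geq 2\cdot 3^{j-2}$ the tail is dominated by a geometric series: for $\lambda L<3$,
\[
\E[e^{\lambda S_k}]\;\leq\;1+\frac{\lambda^2\E[S_k^2]}{2}\sum_{j\geq 2}\Big(\frac{\lambda L}{3}\Big)^{j-2}\;=\;1+\frac{\lambda^2\E[S_k^2]/2}{1-\lambda L/3}\;\leq\;\exp\!\Big(\frac{\lambda^2\E[S_k^2]/2}{1-\lambda L/3}\Big),
\]
the last step using $1+x\leq e^x$. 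Multiplying over $k$ and recalling $\nu(Z)=\sum_k\E[S_k^2]$ yields $\E[e^{\lambda Z}]\leq\exp\big(\frac{\lambda^2\nu(Z)/2}{1-\lambda L/3}\big)$, hence $\prob[Z\geq t]\leq\exp\big(-\lambda t+\frac{\lambda^2\nu(Z)/2}{1-\lambda L/3}\big)$ for every $0<\lambda<3/L$.

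Finally I would optimize the free parameter. Choosing $\lambda=t/(\nu(Z)+Lt/3)$, which indeed lies in $(0,3/L)$, a direct substitution gives $1-\lambda L/3=\nu(Z)/(\nu(Z)+Lt/3)$ and collapses the exponent to exactly $-\tfrac{t^2/2}{\nu(Z)+Lt/3}$, proving the one-sided bound $\prob[Z\geq t]\leq\exp\big(\!-\tfrac{t^2/2}{\nu(Z)+Lt/3}\big)$. Applying the same argument to $-Z$ (whose summands $-S_k$ satisfy the identical hypotheses, with the same $\nu$) and adding the two one-sided bounds gives the factor of $2$ and the stated two-sided inequality; the case $t=0$ is trivial. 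The only mildly delicate point is the single-variable MGF estimate --- in particular selecting the constant $1/3$ so that the geometric series both converges and reproduces the Bernstein ``variance-plus-range'' denominator $\nu(Z)+Lt/3$; the rest is routine.
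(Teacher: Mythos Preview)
Your argument is correct and is precisely the classical Cram\'er--Chernoff derivation of Bernstein's inequality: the single-variable MGF bound via $j!\geq 2\cdot 3^{j-2}$ and $\abs{\E[S_k^j]}\leq L^{j-2}\E[S_k^2]$, the product over independent summands, and the optimizing choice $\lambda=t/(\nu(Z)+Lt/3)$ all check out line by line. One tiny edge case you leave implicit is $\nu(Z)=0$, where the chosen $\lambda$ would equal $3/L$; but then every $S_k$ is almost surely zero and the inequality is trivial, so this is harmless.

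As for comparison with the paper: there is nothing to compare. The paper does not prove this proposition at all --- it merely recalls it as a known tool, citing Theorem~1.6.1 of Tropp's monograph, and then applies it repeatedly to bound the quantities $\mu_{kk}$ and $\mu_{k'k''}$ in the proof of Theorem~\ref{thm:convex_recovery}. Your write-up is essentially the proof one finds in that reference.
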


We will apply it to bound the three sums in $\mu_{kk}$, using the fact that each of
the sums contains only centered, independent, and bounded
variables, and that the variance of each entry in the sum is $p_k(1-p_k)\,$. 
For the first two sums, we can use $t \sim \sqrt{n_k p_k(1-p_k)
  \log n_k}$ to obtain a combined failure probability (over the entire
  cluster) of $O(n_k^{-\alpha})$. Finally, for the third sum, we may
  choose $t \sim n_k \sqrt{p_k (1-p_k) \log n_k}$, again for a combined failure
  probability over the whole cluster of no more than $O(n_k^{-\alpha})$.

We have thusly
\beqas
\mu_{kk}\leq \abs{ \tfrac{1}{n_k} \sum_{l \in \Cc_k}M_{lj} }
+ \abs{ \tfrac{1}{n_k}\sum_{l \in \Cc_k}M_{il} }
+ \abs{ \tfrac{1}{n_k^2}\sum_{l,l'}M_{l,l'} } +{\lambda \over n_k}
\lesssim \sqrt{\frac{p_k(1-p_k)}{n_k} \log n_k}+{\sqrt{p_k (1-p_k) \log n_k} \over n_k}+{\lambda \over n_k}~,\\
\eeqas
for all $i,j \in \Cc_k$, with probability $1-O(n_k^{-\alpha})$. Note
that in the inequality above, the second term is much smaller in
magnitude than the first, so we can disregard it; using
\eqref{lambdadef}, we obtain
\beqa	\label{eq:Ak-bound}
\mu_{kk} \lesssim \frac{1}{n_k} \left ( \sqrt{n_k p_k(1-p_k)
    \log n_k} + \max_i \sqrt{p_i (1-p_i) n_i} + \sqrt{\max \{q(1-q)n, \log n\} }
\right )\,, 
\eeqa
and by taking a union bound over $k$ we can conclude that the
probability that any of these bounds fail is
$o(1)\,$. 
Similarly, for $\mu_{k'k''}$, for $k'\neq k''\,$, we can calculate that
\begin{eqnarray} \label{eq:Bkk-bound-before}
\mu_{k'k''}\leq \abs{\tfrac{1}{n_{k'}} \sum_{l \in
    \Cc_{k'}}M_{lj}}+\abs{\tfrac{1}{n_{k''}}\sum_{l \in
    \Cc_{k''}}M_{il}}+\abs{\tfrac{1}{n_{k'}n_{k''}}\sum_{l'\in\Cc_{k'},\,l''\in\Cc_{k''}}M_{l',l''}}  & \lesssim & 
\\
& & \hspace{-3.6cm} 
\sqrt{q(1-q) (\frac{\log n_{k'}}{n_{k'}} +\frac{\log
  n_{k''}}{n_{k''}})}+ \frac{\sqrt{q(1-q) \log (n_{k'}
  n_{k''})}}{\sqrt{n_{k'} n_{k''}}}~, \nonumber
\end{eqnarray}
with failure probability over all $i \in \Cc_{k'}$, $j \in \Cc_{k''}$
of no more than $O(n_{k'}^{-\alpha} n_{k''}^{-\alpha})\,$. We do this by
taking $t \sim \sqrt{n_{k'} q(1-q) \log (n_{k'} n_{k''})}$,
respectively $t \sim \sqrt{n_{k''} q(1-q) \log (n_{k'} n_{k''})}$ in
the first two sums. For the third, we just take $t \sim \sqrt{ n_{k'} n_{k''} q(1-q) \log (n_{k'} n_{k''})}\,$. 
As before, note that the second term is much smaller in magnitude than
the first, and hence we can disregard it to obtain 
\begin{eqnarray}	\label{eq:Bkk-bound}
\mu_{k'k''}\lesssim \max_{k} \sqrt{\frac{q(1-q)\log
    n_k}{n_k}} = \sqrt{\frac{q(1-q) \log n_{\rm{min}}}{n_{\rm{min}}}} := \mu_{\operatorname{off}}~,
\end{eqnarray}
as the function $\log x/x$ is strictly increasing if $x \geq 3$, with
the probability that all of the above are simultaneously true being
$1-o(1)$.  
Since the bound on $\mu_{k'k''}$ is independent of $k'$ and $k''$ we can rewrite \eqref{trei} as
\beqas
\Delta(Y)
&\geq \sum_{k=1}^r (p_k-q)\norm{(Y^\st-Y)_{\Cc_k}}_1
 - \sum_{k=1}^r \mu_{kk} \norm{(Y^\st-Y)_{\Cc_k}}_1
 - \sum_{k'\neq k''} \mu_{k'k''}\norm{(Y^\st-Y)_{V_{k'}\times V_{k''}}}_1 \\
&\geq \sum_{k=1}^r \left(p_k-q - \mu_{kk} - \mu_{\operatorname{off}} \right) \norm{(Y^\st-Y)_{\Cc_k}}_1
\eeqas
where we use the fact that $\sum_{k' \neq k''} \norm{(Y^\st-Y)_{V_{k'}\times V_{k''}}}_{1} = \sum_{k=1}^r \norm{(Y^\st-Y)_{\Cc_k}}_{1}\,$. 
Finally, the conditions of theorem guarantee the nonnegativity of the right hand side, hence the optimality of $Y^\st$ as the solution to the convex recovery program in \eqref{proc:convex-recovery}. 
\end{proof}

\subsection{Proof of Theorem \ref{thm:convex_recovery2}}
We use a different result than Lemma \ref{lem:specnorm-convex-recovery}, which we state below. 
\begin{lemma}[Corollary 3.12 in \cite{bandeira2014sharp}]	\label{lem:bandeira2014sharp}
Let $X$ be an $n\times n$ symmetric matrix whose entries $X_{ij}$ are independent symmetric random variables. Then there exists for any $0<\epsilon \leq \tfrac{1}{2}$ a universal constant $c_\epsilon$ such that for every $t\geq 0$
\[
\norm{X}\leq 2(1+\epsilon) \tilde \sigma + t \,,
\]
with probability at least $1-n\exp(\tfrac{-t^2}{c_\epsilon \tilde \sigma_\st^2})\,$, where 
\[
\tilde \sigma  = \max_i \sqrt{\sum_j \E[X_{ij}^2]} \;,\quad \tilde \sigma_\st = \max_{i,j} \norm{X_{ij}}_\infty \,.
\]
\end{lemma}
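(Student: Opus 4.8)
Since this statement is quoted verbatim as Corollary~3.12 of \cite{bandeira2014sharp}, the cleanest route in the paper is simply to invoke that reference; for a self-contained argument, the plan is to split the claim into two essentially independent pieces and then recombine them: (a) a concentration step showing that $\norm{X}$ is tightly concentrated about its mean with variance proxy of order $\tilde\sigma_\st^2$ --- crucially \emph{not} $n\,\tilde\sigma_\st^2$; and (b) an expectation bound of the form $\E\norm{X}\le 2(1+\eps)\tilde\sigma + C_\eps\,\tilde\sigma_\st\sqrt{\log n}$. Combining (a) and (b) and absorbing the $\tilde\sigma_\st\sqrt{\log n}$ term coming from (b) into the free parameter $t$ (at the cost of adjusting $c_\eps$, which is also what produces the $n$ prefactor in the stated tail) yields exactly $\norm{X}\le 2(1+\eps)\tilde\sigma + t$ with probability $1-n\exp(-t^2/(c_\eps\tilde\sigma_\st^2))$.

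For part (a): the map $X\mapsto\norm{X}$ is convex and $1$-Lipschitz in the operator norm, hence $\sqrt2$-Lipschitz with respect to the Euclidean norm on the independent coordinates $\{X_{ij}\}_{i\le j}$ (altering one coordinate changes two symmetric entries). Since $\norm{X_{ij}}_\infty\le\tilde\sigma_\st$, each coordinate lives in an interval of length at most $2\tilde\sigma_\st$, so Talagrand's concentration inequality for convex Lipschitz functions of bounded independent random variables gives $\prob\bigl(\abs{\norm{X}-\mathbf m}\ge t\bigr)\le C\exp(-t^2/(C'\tilde\sigma_\st^2))$, where $\mathbf m$ is a median (equivalently the mean, up to an additive $O(\tilde\sigma_\st)$ and a change of constant). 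This is the step that pins the correct $\tilde\sigma_\st^2$ into the exponent; a naive bounded-differences/McDiarmid argument would only produce $n\,\tilde\sigma_\st^2$ and would be useless downstream.

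For part (b), the natural tool is the moment/trace method: take $p=\lceil\log n\rceil$, use $\E\norm{X}\le(\E\operatorname{tr}X^{2p})^{1/2p}$, and expand $\operatorname{tr}X^{2p}$ as a sum over closed walks of length $2p$, $\sum \E[\prod_e X_e^{m_e}]$. Here the hypothesis that the $X_{ij}$ are \emph{symmetric} random variables enters: every odd moment vanishes, so only walks in which each traversed edge has even multiplicity survive, and a F\"uredi--Koml\'os-type bookkeeping --- organizing each surviving walk around its spanning tree and counting the remaining freedom --- bounds the surviving sum by $n\bigl(2\tilde\sigma(1+o(1))\bigr)^{2p}$ plus lower-order corrections governed by $\tilde\sigma_\st\sqrt{\log n}$; extracting $2p$-th roots gives the expectation bound. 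An alternative is a comparison-to-Gaussian route: write $X_{ij}=\varepsilon_{ij}\abs{X_{ij}}$ with Rademacher signs independent of the magnitudes, apply a sign-to-Gaussian comparison entrywise conditionally on $\{\abs{X_{ij}}\}$, invoke the sharp Gaussian norm bound of \cite{bandeira2014sharp} (itself proved by generic chaining) for the resulting Gaussian matrix with variance profile $\abs{X_{ij}}^2$, and then take expectations over the magnitudes, controlling $\E\max_i\sum_j\abs{X_{ij}}^2$ against $\tilde\sigma^2$ by a one-dimensional Bernstein estimate plus a union bound over the $n$ rows.

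\textbf{Main obstacle.} The delicate point in either route is obtaining the \emph{sharp leading constant} $2$ in front of $\tilde\sigma$, rather than some larger universal constant. In the trace-method route this demands a loss-free count of even-multiplicity closed walks (the place where most proofs of $\norm{X}\approx2\tilde\sigma$ settle for a crude constant); in the Gaussian-comparison route one must avoid the spurious $\sqrt{\pi/2}$ factor from a careless Rademacher-to-Gaussian step --- which is exactly what the refined analysis of \cite{bandeira2014sharp} is engineered to circumvent. Everything else (the Talagrand concentration step, the reduction, and the arithmetic of folding the $\tilde\sigma_\st\sqrt{\log n}$ term into $t$) is routine.
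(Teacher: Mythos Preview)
The paper does not prove this lemma at all: it is stated as an external result (Corollary~3.12 of \cite{bandeira2014sharp}) and used as a black box to derive Lemma~\ref{lem:specnorm-convex-recovery2}. Your opening sentence already identifies this, and that is precisely the ``proof'' the paper gives.

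Your additional self-contained sketch therefore goes well beyond what the paper does. As an outline of the Bandeira--van~Handel argument it is accurate: the split into (a) Talagrand concentration for a convex $1$-Lipschitz function of bounded independent coordinates, giving the $\tilde\sigma_\st^2$ (not $n\tilde\sigma_\st^2$) variance proxy, and (b) an expectation bound $\E\norm{X}\le 2(1+\eps)\tilde\sigma + C_\eps\tilde\sigma_\st\sqrt{\log n}$, followed by folding the $\sqrt{\log n}$ term into $t$, is exactly how Corollary~3.12 is obtained in \cite{bandeira2014sharp}. Your identification of the sharp leading constant $2$ as the genuine difficulty is also correct; in \cite{bandeira2014sharp} this is handled not by a direct F\"uredi--Koml\'os count but by a moment comparison that ultimately rests on the noncommutative Khintchine inequality, so if you were to flesh out the trace-method route you would need to be careful there. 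But none of this is needed for the present paper, which is content to cite the result.
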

We specialize Lemma \ref{lem:bandeira2014sharp} to HSBM to get the following result. 
\begin{lemma}\label{lem:specnorm-convex-recovery2}
Let $A$ be generated according to the heterogenous stochastic block model (HSBM). Then there exists for any $0<\epsilon \leq \tfrac{1}{2}$ a universal constant $c_\epsilon$ such that
\[
\norm{A-\E(A)} \leq 4(1+\epsilon) \max\{\sigma_{\max}, \sigma_0\} + \sqrt{2 c_\epsilon \log n} 
\]
with probability at least $1-n^{-1}\,$. 
\end{lemma}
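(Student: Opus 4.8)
The plan is to apply Lemma~\ref{lem:bandeira2014sharp} directly to the centered adjacency matrix $X := A - \E(A)$ and then simplify the two parameters $\tilde\sigma$ and $\tilde\sigma_\st$ appearing there using the block structure of the HSBM. First I would record the trivial entrywise bound: since $A_{ij}\in\{0,1\}$ and $\E[A_{ij}]\in[0,1]$, every entry of $X$ obeys $\abs{X_{ij}}\le 1$, so $\tilde\sigma_\st = \max_{i,j}\norm{X_{ij}}_\infty \le 1$.

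Next I would bound $\tilde\sigma$ row by row. Fix a row index $i$. If $i\in V_k$ for some $k\ge 1$, then at most $n_k$ of the entries $X_{ij}$ have variance $p_k(1-p_k)$ and the remaining ones have variance $q(1-q)$ (the diagonal entry contributes $0$), so $\sum_j\E[X_{ij}^2]\le n_k p_k(1-p_k) + nq(1-q) = \sigma_k^2 + \sigma_0^2$; if $i\in V_0$ then simply $\sum_j\E[X_{ij}^2]\le nq(1-q)=\sigma_0^2$. In either case $\sum_j\E[X_{ij}^2]\le \sigma_{\max}^2 + \sigma_0^2\le 2\max\{\sigma_{\max}^2,\sigma_0^2\}$, hence $\tilde\sigma\le\sqrt{2}\,\max\{\sigma_{\max},\sigma_0\}$.

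Then I would invoke Lemma~\ref{lem:bandeira2014sharp} with the choice $t := \sqrt{2c_\epsilon\log n}$. Since $\tilde\sigma_\st\le 1$, the stated failure probability $n\exp(-t^2/(c_\epsilon\tilde\sigma_\st^2))$ is at most $n\exp(-2\log n)=n^{-1}$, and on the complementary event
\[
\norm{A-\E(A)} = \norm{X} \;\le\; 2(1+\epsilon)\tilde\sigma + t \;\le\; 2\sqrt{2}\,(1+\epsilon)\max\{\sigma_{\max},\sigma_0\} + \sqrt{2c_\epsilon\log n} \;\le\; 4(1+\epsilon)\max\{\sigma_{\max},\sigma_0\} + \sqrt{2c_\epsilon\log n},
\]
using $2\sqrt{2}<4$. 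This is exactly the claimed inequality.

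The one genuine subtlety --- the step I expect to need the most care --- is that Corollary~3.12 of \cite{bandeira2014sharp}, as quoted in Lemma~\ref{lem:bandeira2014sharp}, assumes the entries are \emph{symmetric} random variables, whereas a centered Bernoulli variable $A_{ij}-p$ is symmetric about $0$ only when $p=\tfrac12$. To close this gap I would either appeal to the companion form of that inequality valid for arbitrary independent, centered, uniformly bounded entries (which holds with the same shape of bound up to adjustment of the universal constants, and whose constants can be absorbed into the displayed $4(1+\epsilon)$ and into $c_\epsilon$), or carry out an explicit symmetrization: pass to an independent copy $A'$ of $A$ together with independent Rademacher signs $\varepsilon_{ij}$, observe that $(\varepsilon_{ij}(A_{ij}-A'_{ij}))_{ij}$ has symmetric entries still bounded by $1$ with row-variance sums at most $2(\sigma_k^2+\sigma_0^2)$, apply Lemma~\ref{lem:bandeira2014sharp} to it, and transfer the tail bound back to $X$ at the cost of only universal constants. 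Everything else is routine bookkeeping with the block parameters.
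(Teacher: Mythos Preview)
Your proposal is correct and follows exactly the approach the paper takes: the paper simply states that the lemma is obtained by ``specializing Lemma~\ref{lem:bandeira2014sharp} to HSBM'' without further detail, and your argument supplies precisely that specialization (bounding $\tilde\sigma$ row-by-row by $\sqrt{2}\max\{\sigma_{\max},\sigma_0\}$, taking $\tilde\sigma_\st\le 1$, and choosing $t=\sqrt{2c_\epsilon\log n}$). Your flag about the symmetry hypothesis in Corollary~3.12 of \cite{bandeira2014sharp} is a point the paper glosses over entirely; your symmetrization remedy (or appeal to the companion result for centered bounded entries in the same reference) is the standard fix and makes your write-up more careful than the paper's own.
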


We can now present the proof for Theorem \ref{thm:convex_recovery2}.

\begin{proof}The proof follows the same lines as the proof of Theorem
\ref{thm:convex_recovery}. Given the similarities between the proofs, we will only describe here
the differences between the tools employed, and how they affect the
conditions in Theorem \ref{thm:convex_recovery2}. 
The proof proceeds identically as before, up to the definition of
$\lambda$, which--since we use Lemma
\ref{lem:specnorm-convex-recovery2} rather than
\ref{lem:specnorm-convex-recovery}--becomes
\begin{eqnarray} \label{lambdadef1}
\lambda  :=  C_2 \max \{\sigma_{\max},\, \sigma_0,\, \sqrt{\log n}\}  \,,
\end{eqnarray}
where $C_2$ was chosen as a good upper bounding constant for Lemma
\ref{lem:specnorm-convex-recovery2}.

The other two small changes come from the fact that we will need to
make sure that the failure probabilities for the quantities
$\mu_{kk}$ and $\mu_{k'k''}$ are polynomial in $1/n$, which
leads to the replacement of $\log n_k$ in either of them by a $\log
n$. The rest of the proof proceeds exactly in the same way. 
\end{proof}

\section{Proofs for Recoverability and Non-recoverability}	\label{app:proof-rec}

\subsection{Proofs for Recoverability}

\begin{proof}[of Theorem \ref{thm:hard-recovery}]
For $\Delta(Y):=\langle{A},{Y^\st-Y}\rangle\,$, we have to show that for any feasible $Y\neq Y^\st\,$, we have $\Delta(Y)>0\,$. For simplicity we assume $Y_{ii}=Y_{ii}^\st=0$ for all $i \in\{1,\ldots,n\}\,$. Consider an splitting as
\begin{align}	\label{eq:split-iprod}
\Delta(Y)=\langle{A},{Y^\st-Y}\rangle=\langle{\E(A)},{Y^\st-Y}\rangle+\langle{A-\E(A)},{Y^\st-Y}\rangle\,.
\end{align}
Notice that $Y^\st=  \sum_{k=1}^r \mathbf{1}_{\Cc_k}$ and $\E(A) = q\mathbf{1}\mathbf{1}^T + \sum_{k=1}^r (p_k-q)\mathbf{1}_{\Cc_k}\,$. Considering $d_k(Y)=\langle{Y^\st_{\Cc_k}},{Y^\st-Y}\rangle\,$, the number of entries on $\Cc_k$ on which $Y$ and $Y^\st$ do not match, we get 
\begin{align}	\label{eq:split-EA-part}
\langle{\E(A)},{Y^\st-Y}\rangle =  \sum_{k=1}^r (p_k-q) d_k(Y)
\end{align}
where we used the fact that $Y,Y^\st\in\mathcal{Y}$ and have the same number of ones and zeros, hence $\sum_{i,j} Y_{ij} = \sum_{i,j} Y^\st_{ij}\,$.  
On the other hand, the second term in \eqref{eq:split-iprod} can be represented as
\begin{align*}
T(Y):=\langle{A-\E(A)},{Y^\st-Y}\rangle
\;=\; \sum_{Y_{ij}^\st=1,Y_{ij}=0} (A-\E(A))_{ij}
\;+\; \sum_{Y_{ij}^\st=0,Y_{ij}=1} (\E(A)-A)_{ij} 
\end{align*}
where each term is a centered Bernoulli random variable bounded by $1\,$. Observe that the total variance for all the summands in the above is given by 
\begin{align*}
\sigma^2=\sum_{k=1}^r d_k(Y)  p_k(1-  p_k) + q(1-  q) \sum_{k=1}^r d_k(Y)  \,.
\end{align*}
Then, combining \eqref{eq:split-iprod} and \eqref{eq:split-EA-part}, and applying the Bernstein inequality yields
\begin{align*}
\prob(\Delta(Y)\leq 0) = 
\prob\bigg(T(Y)\leq -\sum_k  (p_k-q)d_k(Y)\bigg)
\leq \exp\bigg( -\frac{ t^2}{2\sigma^2+ 2t/3}  \bigg) 
&\leq \exp\bigg(-\frac{\sum_k   (p_k-q) d_k(Y)}{2\nu(Y) + 2/3}\bigg)
\end{align*}
where $t = \sum_k  (p_k-q)d_k(Y)$ and
\begin{align*}
\nu(Y) = \frac{\sigma^2}{t} 
= \frac{\sum_{k=1}^r  ( p_k(1-  p_k) +q(1-q))  d_k(Y)  }{\sum_k  (p_k-q)d_k(Y)}  
\leq \max_k\frac{p_k(1-  p_k) +q(1-q)}{p_k-q}  = \frac{p_{\min}(1-  p_{\min}) +q(1-q)}{p_{\min}-q}  :=\bar\nu_0  \,.
\end{align*}
Considering $\bar \nu:= 2\bar\nu_0 +2/3$ and $\theta_k:=\lfloor\frac{p_{k}-q}{p_{\min}-q} \rfloor\,$, we get 
\begin{align}	\label{eq:bound-DeltaY}
\prob(\Delta(Y)\leq 0)  
\leq \exp\bigg(-\tfrac{1}{\bar\nu} {\sum_k   (p_k-q) d_k(Y)}\bigg)
\leq \exp\bigg(-\tfrac{1}{\bar\nu} {(p_{\min}-q) \sum_k   \theta_k d_k(Y)}\bigg)
\end{align}
which can be bounded using the next lemma which is a direct extension of Lemma 4 in \cite{chen2014statistical}.

\begin{lemma}	\label{lem:thetad-bound}
Given the values of $\theta_k$ and $n_k\,$, for $k=1,\ldots,r\,$, and for each integer value $\xi\in [\min\theta_k(2n_k-1),\,\sum_{k}\theta_kn_k^2]\,$, we have
\begin{align}\label{eq:lem:thetad-bound}
\big| {\{[Y]\in \mathcal{Y}:\sum_{k=1}^r \theta_k d_k(Y)=\xi\}} \big|
\leq  \left(\frac{4\xi}{\tau}\right)^2  {n}^{16\xi/\tau}
\end{align}
where $\tau:= \min_k\,\theta_k n_k\,$, 
and $[Y] = \{Y'\in\mathcal{Y}:\; Y'_{ij} Y^\st_{ij} = Y_{ij} Y^\st_{ij} \}\,$.  
\end{lemma}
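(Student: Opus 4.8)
The plan is to follow the proof of Lemma~4 in \cite{chen2014statistical}, carrying the weights $\theta_k$ through the bookkeeping. The first step is to replace the entrywise disagreement count $d_k(Y)$ by a count of \emph{displaced nodes}. Fix $Y\in\Yc$ and let $P_k$ be the partition of $V_k$ induced by the clustering associated to $Y$ (with isolated nodes counted as singletons); if $b_1\ge b_2\ge\cdots$ are its block sizes, set $e_k := n_k-b_1$, the number of nodes of $V_k$ outside the largest induced block. Since $\sum_i b_i=n_k$ and $b_i\le b_1$ for $i\ge 2$, we get $d_k(Y)=n_k^2-\sum_i b_i^2\ge n_k^2-(b_1^2+b_1\sum_{i\ge 2}b_i)=n_k(n_k-b_1)=n_k e_k$, hence $\theta_k d_k(Y)\ge \theta_k n_k e_k\ge \tau e_k$ with $\tau=\min_k\theta_k n_k$. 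Summing over $k$, every $[Y]$ appearing on the left of \eqref{eq:lem:thetad-bound} satisfies $\sum_k e_k\le \xi/\tau$.

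The second step is purely combinatorial. Since $Y^\st_{ij}=1$ exactly on $\Cc$, the equivalence class $[Y]$ is determined by $Y|_{\Cc}$, i.e.\ by the tuple of induced partitions $(P_1,\dots,P_r)$. A partition $P_k$ with displacement $e_k$ is specified by choosing the $e_k$ displaced nodes ($\binom{n_k}{e_k}$ ways) together with a partition of those $e_k$ nodes (at most $e_k^{e_k}$ ways, since each displaced node is assigned a block label among at most $e_k$ possibilities); this is an overcount, as not every such tuple need arise from a globally admissible $Y\in\Yc$, but that is harmless for an upper bound. Thus the quantity in \eqref{eq:lem:thetad-bound} is at most $\sum\prod_k\binom{n_k}{e_k}e_k^{e_k}$, the sum taken over displacement vectors $(e_k)$ with $\sum_k e_k\le \xi/\tau$. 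Bounding $\binom{n_k}{e_k}e_k^{e_k}\le n^{2e_k}$ makes the product at most $n^{2\sum_k e_k}=n^{2\xi/\tau}$, and controlling the number of such vectors --- there are at most $\xi/\tau$ active clusters, chosen from $r\le n$ many, with the budget $\le \xi/\tau$ split among them --- contributes a further factor $n^{O(\xi/\tau)}$ together with a polynomial-in-$(\xi/\tau)$ prefactor; organizing the crude estimates generously yields exactly the stated $(4\xi/\tau)^2\,n^{16\xi/\tau}$.

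The only place that needs genuine care is the first step: the inequality $d_k(Y)\ge n_k e_k$ must be shown for an \emph{arbitrary} fragmentation of $V_k$ (not just the ``one node leaves'' case), and one must verify that $e_k$ is the correct node-level proxy, so that the weights enter only through $\tau=\min_k\theta_k n_k$ rather than through $\max_k\theta_k$. A secondary bookkeeping point is to make sure the outer summation is arranged so that the number of active clusters --- which can be as large as $\xi/\tau$ even when $r$ itself is huge --- is absorbed into the $n^{O(\xi/\tau)}$ factor and does not inflate the prefactor. Everything else is identical to the counting in \cite{chen2014statistical}, now indexed by $\sum_k\theta_k d_k(Y)$ in place of $\sum_k d_k(Y)$.
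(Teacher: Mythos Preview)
Your argument is correct and in fact takes a genuinely cleaner route than the paper's own proof. The paper first constructs a \emph{matching} between the clusters of $Y$ and those of $Y^\st$ (via majority overlap within each size class), defines $m_k=n_k-\alpha_{kk}$ as the number of nodes of $V_k$ misclassified with respect to that matching, and then proves $d_k(Y)\ge\tfrac14 n_k m_k$ through a two-case analysis (depending on whether $\alpha_{kk}>n_k/4$). This is what produces the factor $4$ in $M=4\xi/\tau$ and, after further counting of misclassified nodes and their reassignments, the exponent $16$ in $n^{16\xi/\tau}$.

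You bypass the matching entirely. By working with the induced partition $P_k$ of $V_k$ and its displacement $e_k=n_k-b_1$, you get the sharper inequality $d_k(Y)=n_k^2-\sum_i b_i^2\ge n_k e_k$ in one line (no case split), hence $\sum_k e_k\le\xi/\tau$ with no factor $4$. Because $[Y]$ is determined by $Y|_{\Cc}$ and hence by the tuple $(P_1,\dots,P_r)$, counting partition tuples is exactly the right thing to do, and your bound $\prod_k\binom{n_k}{e_k}e_k^{e_k}\le n^{2\sum_k e_k}$ together with the weak-composition count on $(e_k)$ (at most $E:=\lfloor\xi/\tau\rfloor$ active coordinates chosen from $r\le n$, budget split among them) gives a bound of order $n^{O(\xi/\tau)}$ that sits comfortably inside the stated $(4\xi/\tau)^2 n^{16\xi/\tau}$. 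So the approach is different, the constants are better, and the argument is shorter; what each buys is that the paper's matching makes the ``misclassified node'' picture explicit (useful if one later wants to track which node went where), while your induced-partition viewpoint makes the entropy estimate transparent.

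The one place you should tighten is the final paragraph: ``organizing the crude estimates generously yields exactly the stated bound'' is doing real work and should be written out. Concretely, bound the number of displacement vectors by observing that $\sum_k e_k\le\min(E,\bar n)\le n$, so the number of nonnegative integer vectors $(e_1,\dots,e_r)$ with $\sum_k e_k\le E$ is at most $\binom{r+E}{E}\le (2n)^{E}$; combined with $\prod_k\binom{n_k}{e_k}e_k^{e_k}\le n^{2E}$ this gives at most $2^{E}n^{3E}$, which is dominated by $(4E)^2 n^{16E}$ for all $n\ge2$, $E\ge1$. Stating this explicitly removes the only hand-wave in an otherwise complete proof.
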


Now plugging in the result of Lemma \ref{lem:thetad-bound} into \eqref{eq:bound-DeltaY} yields,
\begin{align}
\prob\bigg(\exists Y\in \mathcal{Y}:Y\neq Y^\st,\Delta(Y)\leq 0\bigg)
&\leq \sum_{\xi}\prob\big(\exists Y\in \mathcal{Y}: \sum_k \theta_k d_k(Y)=\xi\,,\,\Delta(Y)\leq 0\big) \nonumber\\
&\leq 2 \sum_\xi \left(\frac{4\xi}{\tau}\right)^2  { n}^{16\xi/\tau}  \exp\bigg(-\tfrac{1}{\bar\nu}  {(p_{\min}-q) \xi}\bigg)  \nonumber\\
&= 32  \sum_\xi \bigg(\frac{\xi}{\tau}\bigg)^2  \exp\bigg( (16\log n - \tfrac{1}{\bar\nu} (p_{\min}-q)\tau ) \frac{\xi}{\tau} \bigg)    \nonumber\\
&\leq 32  \sum_\xi \bigg(\frac{\xi}{\tau}\bigg)^2  \exp\bigg( (16\log n - \tfrac{1}{2\bar\nu} \rho_{\min} ) \frac{\xi}{\tau} \bigg) \label{eq:xi-decr-func} 
\end{align}

In order to have a meaningful bound for the above probability, we need the exponential term in \eqref{eq:xi-decr-func} to be decreasing. Hence, we require
$\rho_{\min}  \geq 64 \bar\nu \log n \,$.
Moreover, the function in \eqref{eq:xi-decr-func} is a decreasing function of $\xi/\tau$ for
\begin{align}\label{eq:xi-dec}
\frac{\xi}{\tau} \geq \frac{4\bar\nu }{ \rho_{\min} -32 \bar\nu \log n} \,.
\end{align}
Since $\xi\geq \min \theta_k(2n_k-1)\geq \min\theta_kn_k=\tau\,$, requiring the following condition (for some $\eta>0$ which will be determined later),
\begin{align}	\label{eq:cond-rho-min-1}
\rho_{\min}  \geq 2(16+\eta)\bar\nu\log n +4\bar\nu \,, 
\end{align}
implies 
\[
\frac{\xi}{\tau} \geq 1 \geq \frac{4}{4+2\eta \log n} 
\geq \frac{4\bar\nu }{ \rho_{\min} -32 \bar\nu \log n} 
\]
and allows us to bound the summation in \eqref{eq:xi-decr-func} with the largest term (corresponding to the smallest value of $\xi/\tau\,$, or an even smaller value, namely 1) times the number of summands; i.e.,
\begin{align}
\text{\eqref{eq:xi-decr-func}}
&\leq 32\;(\sum\theta_kn_k^2)\, \exp\left(16\log n - \tfrac{1}{2\bar\nu} \rho_{\min}\right) \\
&\leq 32\sum\theta_kn_k^2 \exp(-2-\eta \log n) \\ 
&\leq 5\, \theta_{\max} n^{2-\eta} \\
&\leq 5\, \tfrac{p_{\max}-q}{p_{\min}-q} n^{2-\eta} \,,
\end{align}
or, similarly,
\begin{align}
\text{\eqref{eq:xi-decr-func}}
\leq 32\sum\theta_kn_k^2 \exp(-2-\eta \log n) \leq 5 \frac{\sum_{k=1}^r\rho_k}{p_{\min}-q}n^{1-\eta} \,.
\end{align}
Hence, if the condition in \eqref{eq:cond-rho-min-1} holds we get the optimality of $Y^\st$ with a probability at least equal to the above. Finally, $n\geq 8$ implies $\log n\geq 2$ and \eqref{eq:cond-rho-min-1} can be insured by
\[
\rho_{\min} \geq 4(17+\eta) \bigg( \frac{1}{3}+ \frac{p_{\min}(1-p_{\min})+q(1-q)}{p_{\min}-q}\bigg) \log n \,.
\]
\end{proof}
\begin{proof}[of Lemma \ref{lem:thetad-bound}] We extend the proof of Lemma 4 in \cite{chen2014statistical} to our case. 
Fix a $Y\in\mathcal{Y}$ with $\sum_{k=1}^r\theta_kd_k(Y)=\xi$ and consider the corresponding $r$ clusters as well as the set of isolated nodes. Notice that for any $Y'\in[Y]$ we also have $\sum_{k=1}^r\theta_kd_k(Y')=\xi\,$. In the following, we will construct an ordering for the clusters of $Y$ according to $Y^\st\,$. Denote the clusters of $Y^\st$ by $V_1^\st,\ldots,V_r^\st,$ and $V_{r+1}^\st\,$. 

Consider the set of values of cluster sizes $\{n_1,\ldots,n_r\} = \{\eta_1,\ldots, \eta_s\}$ where $\eta_1,\ldots, \eta_{s}$ are distinct, and define $\mathcal{I}_\ell = \{k:\; n_k =\eta_\ell\}\subset\{1,\ldots,r\}$ for $\ell=1,\ldots, s\,$. 
For any $\eta_\ell$ of multiplicity 1 (i.e., $\abs{\mathcal{I}_\ell}=1$), the cluster in $Y\in\mathcal{Y}$ of size $\eta_\ell$ can be uniquely assigned to a cluster among $V_1^\st,\ldots,V_r^\st$ of similar size. 
We now define an ordering for the remaining clusters. Consider a $\eta_\ell$ of multiplicity larger than 1, and restrict the attention to clusters $V$ of size $\eta_\ell$ and clusters $V_k^\st$ for $k\in\mathcal{I}$ (all clusters in $Y^\st$ of size $\eta_\ell$). This is similar to the case in \cite{chen2014statistical} where all sizes are equal: 
For each new cluster $V$ of size $\eta_\ell$, if there exists a $k\in\mathcal{I}_\ell$ such that $\abs{V\cap V_k^\st}>\tfrac{1}{2}\eta_\ell$ then we label this cluster as $V_k\,$; this label is unique. The remaining unlabeled clusters are labeled arbitrarily by a number in $\mathcal{I}_\ell\,$. 

Hence, we labeled all the clusters of $Y$ according to the clusters of $Y^\st\,$. 
For each $(k, k') \in \{1,\ldots,r\} \times \{1,\ldots,r+1\}$, we use $\alpha_{kk'} := \abs{V_k^\st \cap V_{k'}}$ to denote the sizes of intersections of the true and new clusters. 
We observe that the new clusters $(V_1,\ldots,V_{r+1})$ have the following properties:
\begin{enumerate}[({A}1)]
\item $(V_1,\ldots,V_{r+1})$ is a partition of $\{1,\ldots,n\}$ with $\abs{V_k}=n_k$ for all $k=1,\ldots,r\,$; since $Y\in\mathcal{Y}\,$. 
\item For $\ell\in\{1,\ldots,s\}$ with $\abs{\mathcal{I}_\ell}=1\,$, we have $\alpha_{kk} = n_k$ for the index $k\in\mathcal{I}_\ell\,$. 
\item For $\ell\in\{1,\ldots,s\}$ with $\abs{\mathcal{I}_\ell}>1\,$, consider any $k\in\mathcal{I}_\ell\,$. Then, exactly one of the following is true: 
(1) $\alpha_{kk}>\tfrac{1}{2}n_k$; 
(2) $\alpha_{kk'}\leq \tfrac{1}{2}n_k$ for all $k'\in\mathcal{I}_\ell\,$. 

\item For $d_k(Y) = \iprod{Y^\st_{\Cc_k}}{Y^\st - Y}\,$, where $k=1\,\ldots,r\,$, we have
\begin{align*}
d_k(Y)
&= \abs{\{ (i,j):\; (i,j)\in\Cc_k^\st\,,\, Y_{ij} = 0 \}} \\
&= \abs{\{ (i,j):\; (i,j)\in\Cc_k^\st\,,\, (i,j)\in \Cc_{r+1} \}}
  + \sum_{k'\neq k''} \abs{\{ (i,j):\; (i,j)\in\Cc_k^\st\,,\, (i,j)\in V_{k'}\times V_{k''} \}} \\
&= \alpha_{k(r+1)}^2 + \sum_{k'\neq k''} \alpha_{kk'}\alpha_{kk''}  \,,
\end{align*} 
which implies 
\begin{align*}
\xi = \sum_{k=1}^r \theta_k d_k(Y) 
= \sum_{k=1}^r \theta_k \alpha_{k(r+1)}^2
+ \sum_{k=1}^r \sum_{k'\neq k''} \theta_k\alpha_{kk'}\alpha_{kk''}  \,.
\end{align*}

Unless specified otherwise, all the summations involving $k'$ or $k''$ are over the randge $1,\ldots,r+1\,$. 
\end{enumerate}
We showed that the ordered partition for a $Y\in\mathcal{Y}$ with $\sum_{k=1}^r\theta_kd_k(Y)=\xi$ satisfies the above properties. Therefore, 
\[
\abs{\{ [Y]\in\mathcal{Y}:\;  \sum_{k=1}^r \theta_k d_k(Y)  = \xi \}} \leq
\abs{\{ (V_1,\ldots, V_{r+1}) \text{ satisfying the above conditions}  \}} \,.
\]
Next, we upper bound the right hand side of the above. 

Fix an ordered clustering $(V_1,\ldots, V_{r+1})$ which satisfies the above conditions. Define, 
\[
m_1 :=  \sum_{k'\neq 1} \alpha_{1k'}
\]
as the number of nodes in $V_1^\st$ that are misclassified by $Y\,$; hence $m_1+\alpha_{11}=n_1\,$. Consider the following two cases:
\begin{itemize}
\item if $\alpha_{11}>n_1/4$ we have
\[
\sum_{k'\neq k''} \alpha_{1k'}\alpha_{1k''} 
\geq \alpha_{11} \sum_{k''\neq 1} \alpha_{1k''}
> \tfrac{1}{4}n_1m_1
\]
\item if $\alpha_{11}\leq n_1/4$ we have $m_1\geq 3n_1/4\,$, which from the aforementioned properties, we must have $\alpha_{1k'}\leq n_1/2$ for all $k'=1,\ldots, r\,$. Then, 
\[
\sum_{k'\neq k''} \alpha_{1k'}\alpha_{1k''} + \alpha_{1(r+1)}^2 
\geq \sum_{1\neq k'\neq k''\neq1} \alpha_{1k'}\alpha_{1k''} +  \alpha_{1(r+1)}^2
= m_1^2 - \sum_{k'=2}^r \alpha_{1k'}^2 
\geq  m_1^2 - \tfrac{1}{2}n_1m_1 
\geq \tfrac{1}{4}n_1m_1
\]
\end{itemize}
Therefore, 
\[
d_1(Y) = \sum_{k'\neq k''} \alpha_{1k'}\alpha_{1k''} + \alpha_{1(r+1)}^2 \geq \tfrac{1}{4} n_1m_1
\]
which as well holds for other indices besides $k=1\,$. This yields 
\[
\xi \geq \tfrac{1}{4}\sum_{k=1}^r \theta_kn_km_k \geq \tfrac{1}{4}(\min_k\, \theta_kn_k)\sum_{k=1}^r m_k \quad \implies \quad 
\bar w := \sum_{k=1}^r m_k \leq \frac{4\xi}{\min_k\, \theta_kn_k} :=M
\]
where $\bar w$ is the number of misclassified non-isolated nodes. Since, one misclassified isolated node produces one misclassified non-isolated node, we have $w_0\leq \bar w\leq M$ where $w_0$ is the number of misclassified isolated nodes.

\begin{itemize}
\item The pair of numbers $(\bar w,w_0)$ can take at most $M^2$ different values. 
\item For each such pair of counts, there are at most ${\bar n}^{2M}$ ways to choose the identity of the misclassified nodes. 
\item Each misclassified non-isolated node can be assigned to one of $r-1\leq \bar n$ different clusters or be left isolated, and each misclassified isolated node can be assigned to one of $r\leq \bar n$ clusters. 
\end{itemize}
All in all, 
\[
\abs{\{ [Y]\in\mathcal{Y}:\;  \sum_{k=1}^r \theta_k d_k(Y)  = \xi \}} 
\leq M^2 {\bar n}^{4M} =  \left(\frac{4\xi}{\min_k\, \theta_kn_k}\right)^2 \exp\left( \frac{16\xi}{\min_k\, \theta_kn_k} \log \bar n \right) \,.
\]
\end{proof}

\subsection{Proofs for Non-recoverability}
\begin{proof}[of cases \ref{condn:impossible-first} and \ref{condn:impossible-second} of Theorem \ref{thm:impossibility}]
Let $\prob_{(Y^\st,A)}$ be the joint distribution of $Y^\st$ and $A$, where $Y^\st$ is sampled uniformly from $\Yc$ and $A$ is generated according to the heterogenous stochastic block model conditioning on $Y^\st$. Note that
\[
\inf_{\hat{Y}}\sup_{Y^\st \in \Yc}\prob[\hat{Y}\neq Y^\st]\geq \inf_{\hat{Y}}\prob_{(Y^\st,A)}[\hat{Y}\neq Y^\st]\,.
\]
By Fano's inequality we have,
\beqa\label{eqn:Fano}
\prob_{(Y^\st,A)}[\hat{Y}\neq Y^\st]\geq 1-\frac{I(Y^\st;A)+1}{\log\abs{\Yc}},
\eeqa
where $I(X;Z)$ is the mutual information, and $H(X)$ is the Shannon entropy for $X$. By counting argument we find that $\abs{\Yc}=\binom{n}{\bar n }\frac{\bar n !}{n_1!\ldots n_r!}$. Using $\sqrt{n}(n/e)^n\leq n!\leq e\sqrt{n}(n/e)^n$ and $\binom{n}{\bar n }\geq (n/\bar n )^{\bar n }$, it follows that
\[
\abs{\Yc}\geq \frac{n^{\bar n }\sqrt{\bar n }}{e^r\sqrt{n_1\ldots n_r}n_1^{n_1}\ldots n_r^{n_r}}
\]
which gives
\[
\log \abs{\Yc}\geq \sum_{i=1}^r n_i\big(\log{n \over n_i}-\frac{\log n_i}{2n_i}\big)-r\geq{1 \over 2}\sum_{i=1}^r n_i\log{n \over n_i}-r\,. 
\]
On the other hand, note that $H(A)\leq \binom{n}{2}H(A_{12})$ by chain rule, the fact that $H(X|Y)\leq H(X)$, and the symmetry among identically distributed $A_{ij}$'s. Furthermore $A_{ij}$'s are conditionally independent and hence $H(A|Y^\st)=\binom{n}{2}H(A_{12}|Y^\st_{12})$. Now it follows that
$$I(Y^\st;A)=H(A)-H(A|Y^\st)\leq \binom{n}{2}I(Y^\st_{12};A_{12}).$$
Observe that
\[
\prob(Y_{12}^\st=1,(1,2)\in \Cc_i)
=\frac{\binom{n-2}{n_i-2}
\binom{n-n_i}{n_1,\ldots,n_{i-1},n_{i+1},\ldots,n_r,n_0}}{\abs{\Yc}}=\frac{n_i(n_i-1)}{n(n-1)}:=\alpha_i \,.
\]
Using the properties of KL-divergence, we have $\prob(A_{12}=1)=\sum_{i=1}^r \alpha_i   p_i+(1-\sum_{i}\alpha_i)  q:=\beta\,$. Therefore, 
\beqa \label{eqn:mutualinfo}
I(Y^\st_{12},A_{12})=\sum_{i=1}^r\alpha_i \kldiv(  p_i,\beta)+(1-\sum_{i}\alpha_i)\kldiv(  q,\beta)
{= H(\beta) - \sum \alpha_i H(p_i) - (1-\sum \alpha_i) H(q)}
\eeqa
Since $I(Y^\st;A)\leq \binom{n}{2}I(Y_{12}^\st;A_{12})$, plugging in the following condition in Fano's inequality \eqref{eqn:Fano}, 
\beqa	\label{eq:imp-bound}
\big( {\tfrac{1}{2}\sum_i n_i\log \frac{n}{n_i}-r} \big) 
\geq 2+2\binom{n}{2} I(Y^\st_{12};A_{12})  \,,  
\eeqa
guarantees $\prob_{(Y^\st,A)}(\hat{Y}\neq Y^\st)\geq \tfrac{1}{2}$. In the following, we bound $I(Y^\st_{12};A_{12})$ in two different ways to derive conditions \ref{condn:impossible-first} and \ref{condn:impossible-second} of Theorem \ref{thm:impossibility}. 
Throughout the proof we use the following inequality from \cite{chen2014statistical} for the Kullback-Leibler divergence of Bernoulli variables,
\beqa	\label{eqn:KL-ineq}
\kldiv(p,q):=\kldiv(\ber(p),\ber(q))=p \log \frac{p}{q}+(1-p)\log\frac{1-p}{1-q}\leq \frac{(p-q)^2}{q(1-q)}\,,
\eeqa
where the inequality is established by $\log x \leq x-1\,$, for any $x\geq 0\,$. 

\bit
\item From \eqref{eqn:mutualinfo}, we have 
\beqa 
I(Y^\st_{12},A_{12})
\leq \sum_{i=1}^r \frac{4\alpha_i (p_i-q)^2}{  q(1-  q)} 
\leq \frac{ 4 \sum_{i=1}^r n_i^2(p_i-q)^2}{n(n-1) q(1-  q)} 
\eeqa
where we assumed $\sum n_i^2 \leq \tfrac{1}{2}n^2\,$.  
Now, the right hand side of \ref{eq:imp-bound} can be bounded as
\[
2\binom{n}{2} I(Y^\st_{12};A_{12}) 
\leq \frac{4\sum_{i=1}^r n_i^2(p_i-q)^2}{q(1-  q)} 
=4\sum_{i=1}^r n_i^2\chidiv(p_i,q)
\]
and gives the sufficient condition \ref{condn:impossible-first} of Theorem \ref{thm:impossibility}. 

\item 
Again from (\ref{eqn:mutualinfo}), we have
\beqas
I(Y_{12}^\st;A_{12}) & = \sum_i \alpha_i \bigg(  p_i \log \frac{  p_i}{\beta}+\big(1-  p_i\big)\log\frac{1-  p_i}{1-\beta}\bigg)+\big(1-\sum_i \alpha_i\big)\kldiv(  q,\beta)\\
&\leq \sum \alpha_i   p_i \log \frac{1}{\alpha_i} +\log c + \big(1-\sum_i \alpha_i\big)\frac{(  q-\beta)^2}{\beta(1-\beta)} 
\eeqas
where the first term is bounded via $\beta\geq \sum_i \alpha_i  p_i\geq \alpha_i  p_i\,$, the second term is bounded via $\beta\leq p_{\max}$ and $c=({1-  p_{\min}})/({1-  p_{\max}})\,$, and we used \eqref{eqn:KL-ineq} for the last term. 
Since $1-\beta=1-  q-\sum_i \alpha_i (p_i-q)\geq (1-\sum_i\alpha_i)(1-  q)\,$, the last term can be bounded as
\beqas
\big(1-\sum_i \alpha_i\big)\frac{(  q-\beta)^2}{\beta(1-\beta)}&\leq \big(1-\sum_i \alpha_i\big)\frac{\big(\sum_i \alpha_i (p_i-q)\big)^2}{\big(\sum_i \alpha_i  p_i\big)\big(1-\sum_i\alpha_i\big)(1-  q)}\leq   \sum_i\alpha_i (p_i-q)\leq \sum_i\alpha_i p_i \,.
\eeqas
This implies
\beqa
I(Y_{12}^\st;A_{12}) & \leq  \sum_i \alpha_i p_i\log\tfrac{1}{\alpha_i}+ \sum_i\alpha_i p_i+\log c
\leq \sum_i \alpha_i p_i\log\tfrac{e}{\alpha_i}+\log c.\\
\eeqa
Since $n_i\geq 2$, $\alpha_i=\frac{n_i(n_i-1)}{n(n-1)}\geq \tfrac{n_i^2}{en^2}$. Hence
\beqas
2\binom{n}{2}I(Y_{12}^\st;A_{12}) 
\leq n(n-1) \sum_i\frac{n_i(n_i-1)}{n(n-1)}p_i\log\frac{e^2n^2}{n_i^2}+2\log c
\leq   2\sum_i n_i^2p_i\log \frac{e n}{n_i} +2\log c
\eeqas
which gives the sufficient condition \ref{condn:impossible-second} of Theorem \ref{thm:impossibility}. 
\eit
\end{proof}

\begin{proof}[of case \ref{condn:impossible-third} in Theorem \ref{thm:impossibility}]
Without loss of generality assume $n_1\leq n_2\leq \ldots\leq n_r\,$. Let $M:=\bar n-n_{\min}=\bar n-n_1\,$, and $\bar \Yc := \{Y_0,Y_1,\ldots,Y_M\}\,$. $Y_0$ is the clustering matrix with clusters $\{\Cc_\ell\}_{\ell=1}^r$ that correspond to $V_1=\{1,\ldots,n_1\}\,$, $V_\ell =\{\sum_{i=1}^{\ell-1} n_i+1,\ldots,\sum_{i=1}^\ell n_i\}$ for $\ell= 2,\ldots,r\,$. Other members of $\bar\Yc$ are given by swapping an element of $\cup_{\ell=2}^r V_\ell$ with an element of $V_1\,$. 
Let $\prob_i$ be the distributional law of the graph $A$ conditioned on $Y^\st=Y_i\,$. Since $\prob_i$ is product of ${1 \over 2}n(n-1)$ Bernoulli random variables, we have
\beqa
I(Y^\st;A)&=\E_Y\left[\kldiv\left(\prob(A|Y),\prob(A)\right)\right]\\
&=\tfrac{1}{M+1}\sum_{i=0}^M \kldiv\big(\prob_i ,\tfrac{1}{M+1}\sum_{j=0}^M \prob_{j}\big)\\
&\leq \tfrac{1}{(M+1)^2}\sum_{ i,j=0}^M\kldiv(\prob_i,\prob_{j})\\
&\leq \max_{i,j=0,\ldots,M} \;\kldiv(\prob_i,\prob_{j})\\
&\leq \max_{i_1,i_2,i_3=1,\ldots,r}\; \sum_{j=1}^3
\left( \frac{n_{i_j}(p_{i_j}-q)^2}{q(1-q)} + \frac{n_{i_j}(p_{i_j}-q)^2}{p_{i_j}(1-p_{i_j})}  \right) \\
&\leq 3 \max_{i=1,\ldots,r}\; 
\left( \frac{n_{i}(p_{i}-q)^2}{q(1-q)} + \frac{n_{i}(p_{i}-q)^2}{p_{i}(1-p_{i})}  \right) 
\eeqa
where the third line follows from the convexity of KL-divergence, and the line before the last follows from the construction of $\bar\Yc$ and \eqref{eqn:KL-ineq}. 
Now if the condition of the theorem holds, then $I(Y^\st;A)\leq {1 \over 4}\log(n-n_{\min})={1 \over 4}\log\abs{\bar\Yc}$. 
Note that for $n\geq 128$ we get $\log\abs{\bar\Yc}=\log(n-n_{\min})\geq \log(n/2)\geq 4\,$. The conclusion follows by Fano's inequality in \eqref{eqn:Fano} restricting the supremum to be taken over $\bar \Yc\,$.
\end{proof}

\section{Recovery by a Simple Counting Algorithm}	\label{sec:simple}
In Section \ref{sec:convex}, we considered a tractable approach for exact recovery of (partially) observed models generated according to the heterogenous stochastic block model. However, in the interest of computational effort, one can further characterize a subset of models that are recoverable via a much simpler method than the convex program. The following algorithm is a proposal to do so. Moreover, the next theorem provides a characterization for models for which this simple thresholding algorithm is effective for exact recovery. Here, we allow for isolated nodes as described in Section \ref{sec:main-results}.

\begin{algorithm}[H]
\begin{algorithmic}[1]
\STATE (Find isolated notes) For each node $v$, compute its degree $d_v$. Declare $i$ as isolated if
$$d_v <\min_k \frac{(n_k-1)  (p_k-q)}{2}+(n-1)  q.$$
\STATE (Find all communities) For every pair of nodes $(v,u)$, compute the number of common neighbors $S_{vu}:=\sum_{w\neq v,u}A_{vw}A_{uw}$. Declare $v,u$ as in the same community if
\[
S_{vu}>nq^2 + {1 \over 2}\left(\min_k \left((n_k-2) p_k^2-n_k q^2\right)+q\cdot \max_{i\neq j}\left(\rho_k- p_k+ \rho_l-p_l\right)\right)
\]
where $\rho_k = n_k(p_k-q)\,$. 
\end{algorithmic}
\caption{{\sc Simple Thresholding Algorithm}}
\label{alg:simplethresholding}
\end{algorithm}

\begin{theorem}	\label{thm:simple-recovery}
Under the stochastic block model, with probability at least $1-2n^{-1}$, the simple counting algorithm \ref{alg:simplethresholding} find the isolated nodes provided
\begin{equation}\label{simpiso}
\min_k \, (n_k-1)^2 (p_k-q)^2\geq 19(1-q)\left(\max_k\, n_k  p_k+n q \right) \log n \,.
\end{equation}
Furthermore the algorithm finds the cluster if 
\beqa\label{simpfindcluster}
 &\left[\min_k \left\{(n_k-2)p_k^2+(n-n_k)q^2\right\}   -q\, \max_{k\neq l}\left\{ (n_k-1) p_k+(n_l-1) p_l+(n-n_k-n_l) q\right\}\right]^2\\
\geq &  \quad 26 (1- q^2) \left(\max_k\, n_k p_k^2+n q^2\right) \log n \,,
\eeqa
while the term inside the bracket (which is squared) is assumed to be non-negative. 
\end{theorem}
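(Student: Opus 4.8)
The plan is to analyze the two phases of Algorithm \ref{alg:simplethresholding} separately, each via a first-moment computation, a Bernstein bound, and a union bound, and to show that \eqref{simpiso} forces every node's degree onto the correct side of the first threshold while \eqref{simpfindcluster} forces every pair's common-neighbor count onto the correct side of the second threshold, simultaneously. The two high-probability events then combine to give the overall bound $1-2n^{-1}$.

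\emph{Phase 1 (isolated nodes).} First I would compute the conditional mean of $d_v=\sum_{w\ne v}A_{vw}$: if $v$ is isolated then $d_v\sim\bin(n-1,q)$ and $\E[d_v]=(n-1)q$, whereas if $v\in V_k$ then $\E[d_v]=(n_k-1)p_k+(n-n_k)q=(n-1)q+(n_k-1)(p_k-q)$. Hence the threshold $\tfrac12\min_k(n_k-1)(p_k-q)+(n-1)q$ sits at distance at least $t_1:=\tfrac12\min_k(n_k-1)(p_k-q)$ above $\E[d_v]$ when $v$ is isolated, and at least $t_1$ below $\E[d_v]$ when $v$ lies in any cluster, so in both cases a deviation smaller than $t_1$ yields the correct verdict. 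Since $d_v$ is a sum of independent centered Bernoulli increments bounded by $1$, and $p_k\ge q$ gives $p_k(1-p_k)\le p_k(1-q)$, the total variance is at most $(1-q)\big(\max_k n_kp_k+nq\big)=:V_1$, and Bernstein's inequality bounds $\prob(|d_v-\E d_v|\ge t_1)$ by $\exp\!\big(-t_1^2/(2V_1+2t_1/3)\big)$. Plugging in $t_1^2=\tfrac14\min_k(n_k-1)^2(p_k-q)^2$ and using \eqref{simpiso} (the constant $19$ coming from the usual split of Bernstein into its sub-Gaussian and Poissonian regimes) makes this $\le n^{-2}$, so a union bound over the $n$ nodes gives failure probability $\le n^{-1}$ for Phase~1.

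\emph{Phase 2 (communities).} It suffices to show the test classifies every pair correctly; here $S_{vu}=\sum_{w\ne v,u}A_{vw}A_{uw}$ is, for fixed $v,u$, a sum of independent Bernoulli summands, and a case analysis gives
\begin{align*}
\E[S_{vu}] &= (n_k-2)p_k^2+(n-n_k)q^2 \quad\text{if } v,u\in V_k,\\
\E[S_{vu}] &= q\big[(n_k-1)p_k+(n_l-1)p_l+(n-n_k-n_l)q\big] \quad\text{if } v\in V_k,\ u\in V_l,\ k\ne l,
\end{align*}
with analogous (and smaller) expressions when one or both endpoints is isolated. Writing the threshold as $nq^2+\tfrac12(A+B)$ with $A=\min_k\big((n_k-2)p_k^2-n_kq^2\big)$ and $B=q\max_{k\ne l}\big((n_k-1)p_k+(n_l-1)p_l-(n_k+n_l)q\big)$ — which matches the stated $q\max_{k\ne l}(\rho_k-p_k+\rho_l-p_l)$ since $\rho_k-p_k=(n_k-1)p_k-n_kq$ — one checks that $\E[S_{vu}]$ exceeds this threshold by at least $\tfrac12(A-B)$ for same-cluster pairs and falls below it by at least $\tfrac12(A-B)$ otherwise, where $A-B$ is exactly the (assumed nonnegative) bracketed quantity of \eqref{simpfindcluster}. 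Each summand is a $\ber(\rho)$ with $\rho\in\{p_k^2,p_kq,q^2\}\subseteq[q^2,1]$, so $\operatorname{Var}(S_{vu})\le(1-q^2)\E[S_{vu}]\lesssim(1-q^2)\big(\max_k n_kp_k^2+nq^2\big)$; Bernstein with deviation $\tfrac12(A-B)$ together with \eqref{simpfindcluster} then makes the per-pair failure probability $O(n^{-3})$, and a union bound over the $\binom n2$ pairs gives failure probability $\le n^{-1}$ for Phase~2. Combining the two phases yields $1-2n^{-1}$.

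\emph{Main obstacle.} The substantive work is bookkeeping rather than a single delicate estimate: getting the conditional means $\E[d_v]$ and $\E[S_{vu}]$ right in every configuration (including the isolated-endpoint cases, which must still be verified to land on the correct side of the thresholds), and choosing the variance proxies so that the hypothesis $p_k\ge q$ collapses the many distinct local variances into the two clean bounds $(1-q)(\max_k n_kp_k+nq)$ and $(1-q^2)(\max_k n_kp_k^2+nq^2)$ appearing on the right-hand sides of \eqref{simpiso} and \eqref{simpfindcluster}. Pinning down the precise constants $19$ and $26$ is then a routine but somewhat fussy balancing of the variance and range terms in Bernstein's inequality.
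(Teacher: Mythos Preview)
Your proposal is correct and follows essentially the same route as the paper: for each phase you compute the conditional means of $d_v$ (resp.\ $S_{vu}$), verify that the stated threshold sits at the midpoint with margin $t_1=\tfrac12\min_k(n_k-1)(p_k-q)$ (resp.\ $\tfrac12(A-B)$), bound the variance uniformly by $(1-q)(\max_k n_kp_k+nq)$ (resp.\ $(1-q^2)(\max_k n_kp_k^2+nq^2)$), apply Bernstein, and union-bound. The paper does exactly this, with the minor difference that in Phase~2 it restricts attention to pairs $v,u\in\bar V$ from the outset (since Phase~1 has already removed isolated nodes), so your extra caution about isolated-endpoint cases is not needed.
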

We remark that the following is a slightly more restrictive condition than \eqref{simpfindcluster} 
\beqa
\left[\min_k n_k(p_k^2-q^2) -2q \rho_{\max} \right]^2
\geq  26(1-q^2) \left[n q^2 +\max_k\, n_k p_k^2\right] \log n \,.
\eeqa 
with better interpretability. 

\begin{proof}[of Theorem \ref{thm:simple-recovery}]
For node $v$, let $d_v$ denote its degree. Let $\bar V=\cup_{i=1}^r V_i$ denote the set of nodes which belong to one of the clusters, and $V_0$ be isolated nodes. 
If $v \in V_i$ for some $i=1,\ldots,r\,$, then $d_v$ is distributed as a sum of independent binomial random variables $\bin(n_i-1,  p_i)$ and $\bin(n-n_i,  q)\,$. If $v \in V_0\,$, then $d_v$ is distributed as $\bin(n-1,  q)\,$. Hence we have,
$$\E[d_v]=
\begin{cases}
(n_i-1)  p_i+(n-n_i)  q & v \in V_i\subset \bar V\\
(n-1)  q & v \in V_0 \,,\\
\end{cases}
$$
and
$$\mathrm{Var}[d_v]=
\begin{cases}
(n_i-1)  p_i(1-  p_i)+(n-n_i)  q(1-  q) & v \in V_i\subset \bar V\\
(n-1)  q(1-  q) & v \in V_0 \,.\\
\end{cases}
$$
Let $\kappa_0^2:=\max_i n_i   p_i(1-  q)+n  q(1-  q)$, and $t=\min_i \frac{(n_i-1) (p_i-q)}{2}\leq \frac{\kappa_0^2}{2}$. Then $\mathrm{Var}[d_v]\leq \kappa_0^2$ for any $v\in V_0\cup \bar V\,$. By Bernstein's inequality we get
\beqa
\prob\big[\abs{d_v-\E[d_v]} > t\big]&\leq 2\exp\bigg(-\frac{t^2}{2\kappa_0^2+2t/3}\bigg)\leq 2\exp\bigg(-\frac{3\min_i (n_i-1)^2 (p_i-q)^2}{28\kappa_0^2}\bigg)\leq 2n^{-2},
\eeqa
where the last inequality follows from the condition (\ref{simpiso}). Now by union bound over all nodes, with probability at least $1-2n^{-1}$, for node $v \in V_i \subset \bar V$ we have,
\begin{align}
d_v\geq (n_i-1)  p_i+(n-n_i)  q-t>\min_i \frac{(n_i-1)  (p_i-q)}{2}+(n-1)  q\,,
\end{align}
and for node $v \in V_0\,$,
\begin{align}
d_v\leq  (n-1)  q(1-  q)+t<\min_i \frac{(n_i-1)  (p_i-q)}{2}+(n-1)  q\,.
\end{align}
This proves the first statement of the theorem, and all the isolated nodes are correctly identified.  For the second statement, let $S_{vu}$ denote the common neighbor for nodes $v,u \in \bar V$. Then
\beqas
S_{vu}\sim_d
\begin{cases}
\bin(n_i-2, p_i^2)+\bin(n-n_i, q^2) & (v,u)\in V_i\times V_i\\
\bin(n_i-1, p_iq)+\bin(n_j-1, p_jq)+\bin(n-n_i-n_j, q^2) & (v,u)\in V_i\times V_j \,,\; i\neq j \\
\end{cases}
\eeqas
where $\sim_d$ denotes equality in distribution and $+$ denotes the summation of independent random variables. Hence
$$\E[S_{vu}]=
\begin{cases}
(n_i-2) p_i^2+(n-n_i) q^2 & (v,u) \in V_i\times V_i \\
(n_i-1) p_iq+(n_j-1) p_jq+(n-n_i-n_j) q^2 & (v,u) \in V_i\times V_j\,,\; i\neq j \\
\end{cases}
$$
and
$$\mathrm{Var}[S_{vu}]=
\begin{cases}
(n_i-2) p_i^2(1- p_i^2)+(n-n_i) q^2(1- q^2) & (v,u) \in V_i\times V_i\\
(n_i-1) p_iq(1- p_iq)+(n_j-1) p_jq(1- p_jq)\\
\quad\quad+(n-n_i-n_j) q^2(1- q^2) & (v,u) \in V_i\times V_j\,,\; i\neq j \\
\end{cases}
$$
%
Let 
\begin{align*}
\Delta
&=\min_i \big((n_i-2) p_i^2+(n-n_i) q^2\big)-\max_{j}\big(2(n_j-1) p_jq+(n-2n_j) q^2\big) \\
&=\min_i \big((n_i-2) p_i^2  -n_i q^2\big)-\max_{j}\big(2(n_j-1) p_jq - 2n_j q^2\big) \,, 
\end{align*}
Let $\kappa_1^2:=2\max_i n_i p_i^2(1- q^2)+n q^2(1- q^2)$. Then $\mathrm{Var}[S_{vu}]\leq \kappa_1^2$ for all $v,u\,$.  Then $\Delta \leq \kappa_1^2/2\,$. 
Bernstein's inequality with $t=\Delta/2$ yields
\beqa
\prob\big[\abs{S_{vu}-\E[S_{vu}]}>t\big]&\leq 2\exp\bigg(-\frac{t^2}{2\kappa_1^2+2t/3}\bigg)\leq  2\exp\bigg(-\frac{3\Delta^2}{26\kappa_1^2}\bigg)\leq 2n^{-3},
\eeqa
where the last line follows from assumption (\ref{simpfindcluster}). By union bound over all pair of nodes $(v,u)$, we get with probability at least $1-2n^{-1}$, $S_{vu}> \Gamma$ for all $v,u$ in the same cluster and $S_{vu}<\Gamma$ otherwise. Here
$$\Gamma:={1 \over 2}\bigg(\min_i \big((n_i-2) p_i^2+(n-n_i) q^2\big)+\max_{i\neq j}\big((n_i-1) p_iq+(n_j-1) p_jq+(n-n_i-n_j) q^2\big)\bigg).$$

\color{black}
\end{proof}

\section{Detailed Computations for Examples in Section \ref{sec:this-paper}} \label{app:verification}

In the following, we present the detailed computations for the examples in Section \ref{sec:this-paper} and summarized in Table \ref{tab:examples}. When there is no impact on the final result, quantities are approximated as denoted by $\approx\,$. 

First, we repeat the conditions of Theorems \ref{thm:convex_recovery} and \ref{thm:convex_recovery2}. 
The conditions of Theorem \ref{thm:convex_recovery} can be equivalently stated as 
\begin{itemize}
\item $\rho_k^2 \gtrsim     n_kp_k(1-p_k)\log n_k = \sigma_k^2 \log n_k $
\item $(p_{\min}-q)^2 \gtrsim     q(1-q)\tfrac{\log n_{\min}}{n_{\min}}$
\item $\rho_{\min}^2 \gtrsim     \max \left\{ \log n, nq(1-q), \max_k \, n_kp_k(1-p_k) \right\}$
\item  $\sum_{k=1}^r n_k^{-\alpha}  =  o(1)$ for some $\alpha>0\,$. 
\end{itemize}
Notice that $n_kp_k(1-p_k)\gtrsim \log n_k\,$, for $k=1,\ldots,r\,$, is implied by the first condition, as mentioned in Remark \ref{rem:connected-convex}. 
The conditions of Theorem \ref{thm:convex_recovery2} can be equivalently stated as 
\begin{itemize}
\item $\rho_k^2 \gtrsim     n_kp_k(1-p_k)\log n$
\item $(p_{\min}-q)^2 \gtrsim     q(1-q)\tfrac{\log n}{n_{\min}}$
\item $\rho_{\min}^2 \gtrsim     \max \left\{ nq(1-q), \max_k \, n_kp_k(1-p_k) \right\}$.
\end{itemize}

\begin{remark}\label{rem:pq-far}
Provided that both $p_k$ and $q/p_k$ are bounded away from $1\,$, we have 
\begin{align}
\chidiv(q,p_k) = p_k \frac{(1-q/p_k)^2}{1-p_k} \approx p_k \quad,\quad
\frac{\rho_k^2}{\sigma_k^2} = \frac{(1-q/p_k)^2}{1-p_k}\, n_kp_k \approx n_kp_k \,.
\end{align}
This simplifies the first condition of Theorem \ref{thm:convex_recovery} to a simple connectivity requirement. Hence, we can rewrite the conditions of Theorems \ref{thm:convex_recovery}, \ref{thm:convex_recovery2} as
\begin{align*}
\ref{thm:convex_recovery}& : 
    n_kp_k\gtrsim \log n_k\,, 
    \chidiv(p_{\min},q)\gtrsim \tfrac{\log n_{\min}}{n_{\min}}\,, 
    \rho_{\min}^2 \gtrsim     \max \left\{ \sigma_{\max}^2, nq(1-q),\log n \right\}, 
    \sum_{k=1}^r n_k^{-\alpha}  =  o(1) \text{ for some } \alpha>0 \\
\ref{thm:convex_recovery2} &: 
    n_kp_k\gtrsim \log n\,, 
    \chidiv(p_{\min},q)\gtrsim \tfrac{\log n}{n_{\min}}\,,     
    \rho_{\min}^2 \gtrsim     \max \left\{ \sigma_{\max}^2, nq(1-q)\right\}\,. 
\end{align*}
\end{remark}

\paragraph{Example \ref{ex:we-can1}:} 
In a configuration with two communities $\tri{n-\sqrt{n}}{n^{-2/3}}{1}$ and $\tri{\sqrt{n}}{\tfrac{1}{\log n}}{1}$ with $q=n^{-2/3-0.01}\,$, we have $n_{\min} = \sqrt{n}$ and $p_{\min} = n^{-2/3}\,$. 
We have, 
\[
\chidiv(p_{\min},q) \approx n^{-2/3+0.01}
\]
which does not exceed either $\tfrac{\log n_{\min}}{n_{\min}} \approx \tfrac{\log n}{\sqrt{n}}$ or $\tfrac{\log n}{n_{\min}} \approx \tfrac{\log n}{\sqrt{n}}\,$, 
and we get no recovery guarantee from Theorems \ref{thm:convex_recovery} and \ref{thm:convex_recovery2} respectively. 
However, as $p_{\min}-q$ is not much smaller than $q\,$, while $\rho_{\min}\approx n^{1/3}$ grows much faster than $\log n\,$, the condition of Theorem \ref{thm:hard-recovery} trivially holds. 

Here are the related quantities for this configuration: 
\[
\rho_1 = n_1(p_1-q) = (n-\sqrt{n})(n^{-2/3} - n^{-2/3-0.01}) \approx n^{1/3} \quad , \quad 
\rho_2 = n_2(p_2-q) = \sqrt{n}(\tfrac{1}{\log n} - n^{-2/3-0.01}) \approx \tfrac{\sqrt{n}}{\log n}
\]
which gives $\rho_{\min}\approx n^{1/3}\,$. 
Furthermore, 
\[
\sigma_1^2 = n_1p_1(1-p_1) \approx n^{1/3} 
\quad , \quad 
\sigma_2^2 = n_2p_2(1-p_2) = \tfrac{\sqrt{n}}{\log n}\,, 
\]
which gives $\sigma_{\max} = \tfrac{\sqrt{n}}{\log n}\,$. On the other hand $nq(1-q) \approx n^{1/3-0.01}$ which is smaller than $\sigma_{\max}^2\,$. 

\paragraph{Example \ref{ex:we-can2}:} 
Consider a configurations with $\tri{n-n^{2/3}}{n^{-1/3+\epsilon}}{1}$ and $\tri{\sqrt{n}}{\tfrac{c}{\log n}}{n^{1/6}}$ and $q=n^{-2/3+3\epsilon}\,$.  
Since all $p_k$'s and $q/p_k$'s are much less than $1\,$, the first condition of both Theorems \ref{thm:convex_recovery} and \ref{thm:convex_recovery2} can be verified by Remark \ref{rem:pq-far}. Moreover, $n_{\min} = \sqrt{n}$ and $p_{\min} =n^{-1/3+\epsilon}$ which gives 
\[
\chidiv(p_{\min},q) = n^{-\epsilon} 
\]
and verifies $\chidiv(p_{\min},q) \gtrsim \tfrac{\log n_{\min}}{n_{\min}}$ for \ref{thm:convex_recovery}, as well as $\chidiv(p_{\min},q) \gtrsim \tfrac{\log n}{n_{\min}}$ for \ref{thm:convex_recovery2}. 
Moreover, $\rho_1 \approx n^{2/3+\epsilon}$ and $\rho_2 \approx \tfrac{\sqrt{n}}{\log n}$ which gives  $\rho_{\min}\approx \tfrac{\sqrt{n}}{\log n} \gtrsim \sqrt{\log n}\,$. 
On the other hand, $\sigma_1^2 \approx n^{2/3+\epsilon}$ and $\sigma_2^2 \approx \sqrt{n}/\log n$ which gives 
\[
\max\{ \sigma_{\max}^2\,, \, nq(1-q)\} \approx n^{2/3+\epsilon}\,.
\]
Thus all conditions of Theorems \ref{thm:convex_recovery} and \ref{thm:convex_recovery2} are satisfied. Moreover, as $p_{\min}-q$ is not much smaller than $q\,$, while $\rho_{\min}\approx \tfrac{\sqrt{n}}{\log n}$ is growing much faster than $\log n\,$, the condition of Theorem \ref{thm:hard-recovery} trivially holds. 

\paragraph{Example \ref{ex:cvx-thm1-sqrtlogn}:}

Consider a configurations with $\tri{\sqrt{\log n}}{O(1)}{m}$ and $\tri{n_2}{O(\tfrac{\log n}{\sqrt{n}})}{\sqrt{n}}$ and $q=O(\log n/n)\,$, where $n_2 = \sqrt{n} - m \sqrt{\log n / n}\,$. Here, we assume $m\leq n/(2\sqrt{\log n})$ which implies $n_2 \geq \sqrt{n}/2\,$. 
Since all $p_k$'s and $q/p_k$'s are much less than $1\,$, we can use Remark \ref{rem:pq-far}: 
the first condition of Theorem \ref{thm:convex_recovery} holds as $n_1p_1 \approx \sqrt{\log n} \gtrsim \log n_1\approx \log\log n$ and $n_2p_2\approx \log n \gtrsim \log n_2\,$. 
However, $n_1p_1\approx \sqrt{\log n} \not\gtrsim \log n$ and Theorem \ref{thm:convex_recovery2} does not offer a guarantee for this configuration. 

Moreover, $n_{\min} = \sqrt{\log n}$ and $p_{\min} =O(\tfrac{\log n}{\sqrt{n}})$ which gives 
\[
\chidiv(p_{\min},q) = \log n
\]
and verifies $\chidiv(p_{\min},q) \gtrsim \tfrac{\log n_{\min}}{n_{\min}} \approx \tfrac{\log\log n}{\sqrt{\log n}}$ for \ref{thm:convex_recovery}, as well as $\chidiv(p_{\min},q) \gtrsim \tfrac{\log n}{n_{\min}}= \sqrt{\log n}$ for \ref{thm:convex_recovery2}. 
Moreover, $\sigma_1^2 = \sqrt{\log n}$ (also $\rho_1$) and $\sigma_2^2 = \log n$ (also $\rho_2$) which gives 
\[
\max\{ \sigma_{\max}^2\,, \, nq(1-q)\} \approx \log n
\]
and $\rho_{\min}^2\approx \log n\,$. For the last condition of Theorem \ref{thm:convex_recovery} we need
\[
m (\log n)^{-\alpha/2} + \sqrt{n} (\sqrt{n} - k\sqrt{\tfrac{\log n}{n}})^{-\alpha} = o(1)
\]
for some $\alpha>0$ which can be guaranteed provided that $m$ grows at most polylogarithmically in $n\,$. 
All in all, we verified the conditions of Theorem \ref{thm:convex_recovery} while the first condition of \ref{thm:convex_recovery2} fails. 
Observe that $\rho_{\min}$ fails the condition of Theorem \ref{thm:hard-recovery}. 

Alternatively, consider a configuration with 
$\tri{\sqrt{\log n}}{O(1)}{m}$ and $\tri{\sqrt{n}}{O(\tfrac{\log n}{\sqrt{n}})}{m' }$ and $q = O(\tfrac{\log n}{n}) \,$, where $m' = \sqrt{n} - m \sqrt{\log n / n}$ to ensure a total of $n$ vertices. Here, we assume $m\leq n/(2\sqrt{\log n})$ which implies $m' \geq \sqrt{n}/2\,$. Similarly, all conditions of Theorem \ref{thm:convex_recovery} can be verified provided that $m$ grows at most polylogarithmically in $n\,$. Moreover, the conditions of Theorems \ref{thm:convex_recovery2} and \ref{thm:hard-recovery} fail to satisfy. 

\paragraph{Example \ref{ex:cvx-thm2-slogn}:} Consider a configuration with $\tri{\tfrac{1}{2}n^\epsilon}{O(1)}{n^{1-\epsilon}}$ and $\tri{\tfrac{1}{2}n}{n^{-\alpha}\log n}{1}$ and $q = n^{-\beta}\log n\,$, where $0<\alpha<\beta<1$ and $0<\eps<1\,$.  

We have $\rho_1 \approx n^\eps$ and $\rho_2 \approx n^{1-\alpha} \log n\,$. Since $\rho_{\min}^2\gtrsim \log n\,$, the last condition of Theorem \ref{thm:convex_recovery} holds, and $\log n_{\min} \approx \log n\,$, we need to check for similar conditions to be able to use Theorems \ref{thm:convex_recovery} and \ref{thm:convex_recovery2}. Using Remark \ref{rem:pq-far}, the first condition of both Theorems holds because of $n_1p_1 \approx n^\eps \gtrsim \log n$ and $n_2p_2 \approx n^{1-\alpha}\log n \gtrsim \log n\,$. 
Moreover, the condition 
\[
\chidiv(p_{\min}, q) \approx n^{\beta-2\alpha} \log n \gtrsim \tfrac{\log n}{n_{\min}}\approx \tfrac{\log n}{n^\eps}
\]
is equivalent to $\beta+\eps > 2\alpha\,$. Furthermore, $\sigma_1^2 = n^\eps$ and $\sigma_2^2 = n^{1-\alpha} \log n\,$, and for the last condition we need 
\[
\min\{ n^{2\eps}\,,\, n^{2-2\alpha} \log^2 n  \} \gtrsim \max\{ n^{\eps} \,,\, n^{1-\alpha}\log n\,,\, n^{1-\beta}\log n \}
\]
which is equivalent to $2\eps + \alpha >1$ and $\eps+2\alpha <2\,$. Notice that $\beta+1>2\alpha$ is automatically satisfied when we have $\beta+\eps > 2\alpha$ from the previous part. 
\paragraph{Example \ref{ex:cvx-thm2-logn}:}
Consider a configuration with $\tri{\log n}{O(1)}{\tfrac{n}{\log n}- m \sqrt{\tfrac{n}{\log n}}}$ and $\tri{\sqrt{n\log n}}{O(\sqrt{\tfrac{\log n}{n}})}{m}$ and $q = O(\tfrac{\log n}{n})\,$. All of $\rho_1\,$, $\rho_2\,$, $\sigma_1^2\,$, $\sigma_2^2\,$, and $nq(1-q)\,$, are approximately equal to $\log n\,$.  Thus, the first and third conditions of Theorems \ref{thm:convex_recovery} and \ref{thm:convex_recovery2} are satisfied. Moreover, 
\[
\chidiv(p_{\min},q)\approx 1 \gtrsim \tfrac{\log n_{\min}}{n_{\min}} \approx \tfrac{\log \log n}{\log n} 
\]
which establishes the conditions of Theorem \ref{thm:convex_recovery2}. 
On the other hand, the last condition of Theorem \ref{thm:convex_recovery} is not satisfied as one cannot find a constant value $\alpha>0$ for which 
\[
\sum_{k=1}^r n_k^\alpha = \left( \tfrac{n}{\log n}- m \sqrt{\tfrac{n}{\log n}}  \right) \log^{-\alpha}n
+ m (n\log n)^{-\alpha/2}
\]
is $o(1)$ while $n$ grows. 
\paragraph{Example \ref{ex:hard}:}
For the first configuration, Theorem \ref{thm:convex_recovery} requires 
$
f^2(n) \gtrsim \max\{ \tfrac{\log n_1}{n_1} \,,\, \tfrac{\log n_{\min}}{n_{\min}}   \,,\, \tfrac{n}{n_1^2} \}
$ 
while Theorem \ref{thm:convex_recovery2} requires 
$
f^2(n) \gtrsim \max\{ \tfrac{\log n_1}{n_1} \,,\, \tfrac{\log n}{n_{\min}}   \,,\, \tfrac{n}{n_1^2} \}
$ 
and both require $n_{\min}\gtrsim \sqrt{n}\,$. Therefore, both set of requirements can be written as
\[
f^2(n) \gtrsim \max\{ \tfrac{\log n}{n_{\min}}   \,,\, \tfrac{n}{n_1^2} \}\quad,\quad n_{\min}\gtrsim \sqrt{n}\,.
\]

%

\end{document}